\newtheorem{lemma}{Lemma}
\newtheorem{remark}{Remark}
\newtheorem{theorem}{Theorem}
\begin{document}

\title{Hyperspectral Image Denoising via Multi-modal and Double-weighted Tensor Nuclear Norm}

\author{Sheng Liu, Xiaozhen Xie, and Wenfeng Kong
\thanks{Sheng Liu, Xiaozhen Xie, and Wenfeng Kong are with College of Science, Northwest A\&F University, Yangling 712100, China (e-mail: liu\_sheng@nwafu.edu.cn; e-mail: xiexzh@nwafu.edu.cn (Corresponding author); e-mail: kksama@nwafu.edu.cn).}}

\markboth{Journal of \LaTeX\ Class Files, Vol. 14, No. 8, August 2015}
{Shell \MakeLowercase{\textit{et al.}}: Bare Demo of IEEEtran.cls for IEEE Journals}
\maketitle

\begin{abstract}

Hyperspectral images (HSIs) usually suffer from different types of pollution.
This severely reduces the quality of HSIs and limits the accuracy of subsequent processing tasks.
HSI denoising can be modeled as a low-rank tensor denoising problem.
Tensor nuclear norm (TNN) induced by tensor singular value decomposition plays an important role in this problem.
In this letter, we first reconsider three inconspicuous but crucial phenomenons in TNN.
In the Fourier transform domain of HSIs, different frequency slices (FS) contain different information;
different singular values (SVs) of each FS also represent different information.
The two physical phenomenons lie not only in the spectral mode but also in the spatial modes.
Then based on them, we propose a multi-modal and double-weighted TNN.
It can adaptively shrink the FS and SVs according to their physical meanings in all modes of HSIs.
In the framework of the alternating direction method of multipliers, we design an effective alternating iterative strategy to optimize our proposed model.
Denoised experiments on both synthetic and real HSI datasets demonstrate their superiority against related methods.

\end{abstract}

\begin{IEEEkeywords}
Hyperspectral image, tensor nuclear norm, double weighting, frequency slices, multi-modal.
\end{IEEEkeywords}

\IEEEpeerreviewmaketitle

\section{Introduction}
\IEEEPARstart{H}{yperspectral} image (HSI) has been widely used in many fields \cite{ r2bioucas2013hyperspectral} due to its wealthy spatial and spectral information of a real scene.
However, the observed HSIs are usually corrupted by different noises, e.g., Gaussian noise, impulse noise, deadlines, stripes and their mixtures.
This seriously affects the subsequent applications of HSI, such as unmixing, target detection, and so on. 
Therefore, HSI denoising , as a preprocessing step to remove mixed noise for various subsequent applications, is a valuable and active research topic.

In HSIs, different spectral bands are images from the same scene under different wavelengths.
It means the global correlation or low-rank prior lies in HSIs \cite{Denoising_GRSL}.
Based on this prior, how to measure the HSI low-rankness becomes the key to denoising tasks.
As HSIs can be treated as 3rd-order tensor,   this problem is turned into a 3rd-order tensor low-rank problem. 
Due to the nonunique definitions of the tensor rank, different tensor decompositions and their corresponding tensor ranks are proposed,
such as the Tucker decomposition \cite{LRTDTV,Tuckear_GRSL},  PARAFAC decomposition \cite{CP_r1}, and tensor singular value decomposition (t-SVD) \cite{lu2019TRPCA,3DTNN}, to exploit the low-rankness of HSIs.

Among them, the tensor tubal rank induced by t-SVD can characterize the low-rank structure of HSIs very well.
Its convex relaxation is the tensor nuclear norm (TNN) \cite{t_SVD}.
TNN is effective to keep the intrinsic structure of HSIs.
Hence, TNN has attracted extensive attention for HSI denoising problems in recent years \cite{3DTNN,zengTGRS,zengSP}.
However, during the definition of TNN, there are three kinds of prior knowledge that are underutilized for further exploiting the low-rankness in HSIs.
Firstly, in the Fourier transform domain of HSIs,
the low-frequency slices carry the profile information of HSIs,
while the high-frequency slices mainly carry the noise information of HSIs.
Secondly, in each frequency slices,
bigger singular values mainly contain information on clean data and smaller singular values mainly contain information on noise.
Thirdly, low-rankness not only exists in the spectral dimension but also lies in the spatial dimensions \cite{LRTDTV}.
The classical TNN only takes the Fourier transform to connect the spatial dimensions with the spectral dimension
and lacks flexibility for handling different correlations along with different modes of HSIs \cite{3DTNN}.

In this letter, to take full advantage of the above prior knowledge and improve the capability and flexibility of TNN,
we propose a multi-modal and double-weighted TNN for HSI denoising tasks.
The merits of our model are four-fold.
First, according to information types in different frequency slices in the Fourier transform domain,
we adaptively assign bigger weights to slices that mainly contain noise information and smaller weights to slices that mainly contain profile information,
which can depress noise more and simultaneously preserve the profile information of clean HSIs better.
Second, in each frequency slice,
we use the partial sum of singular values to only shrink small singular values, which can better protect the clean data information contained in big singular values.
Third, we apply the double-weighted TNN in all modes of HSIs, which can achieve a more flexible and accurate characterization of HSI low-rankness.
Finally, we develop an alternating direction method of multiplier based algorithm to efficiently solve the proposed model,
Compared with various competing HSI denoising methods, the best-denoised performances are obtained by our method synthetic and real HSI datasets.

\section{Preliminaries}
\vspace{-0.3cm}
\subsection{Notations}
In this letter, matrix and tensor are denoted as bold upper-case letter $\mathbf{X}$ and calligraphic letter $\mathcal{X}$, respectively.
For a 3rd-order tensor $\mathcal{X}\in\mathbb{R}^{n_1 \times n_2 \times n_3}$, its $(i,j,k)$-th component is represented as $\mathcal{X}(i,j,k)$.
For $\mathcal{X}$, $\mathcal{Y} \in\mathbb{R}^{n_1 \times n_2 \times n_3}$, their inner product is defined as $<\mathcal{X},\mathcal{Y}>=\sum_{i=1}^{n_1}\sum_{j=1}^{n_2}\sum_{k=1}^{n_3}x_{ijk}y_{ijk}$.
Then the Frobenius norm of a tensor $\mathcal{X}$ is defined as $\|\mathcal{X}\|_{F}=\sqrt{<\mathcal{X},\mathcal{X}>}$.
The $k$-th frontal slice of $\mathcal{X}$ is denoted as $\mathbf{X}^{(k)}=\mathcal{X}(:, :,k )$.
The fast Fourier transform along the third mode of $\mathcal{X}$ is represented as $\bar{\mathcal{X}}=\texttt{fft}(\mathcal{X},[],3)$
and its inverse operation is $\mathcal{X}=\texttt{ifft}(\bar{\mathcal{X}},[],3)$.
The mode-$p$ permutation of $\mathcal{X}$ is defined as $\mathcal{X}_{p}=\texttt{permute}(\mathcal{X},p)$, $p=1,2,3$,
where the $m$-th mode-3 slice of $\mathcal{X}_{p}$ is the $m$-th mode-$p$ slice of $\mathcal{X}$, i.e.,
$\mathcal{X}(i,j,k)=$$\mathcal{X}_{1}(j,k,i)=$$\mathcal{X}_{2}(k,i,j)=$$\mathcal{X}_{3}(i,j,k)$.
Also, its inverse operation is  $\mathcal{X}=\texttt{ipermute}(\mathcal{X}_{p},p)$.

\subsection{Problem Formulation}
An ideal HSI can be viewed as a 3rd-order tensor $\mathcal{X}\in\mathbb{R}^{n_1 \times n_2 \times n_3}$
and usually is assumed to be low-rank.
Corrupted by mixed noise, its observed version can be modeled as
\begin{equation}
\label{eq_1}
  \mathcal{Y} = \mathcal{X} + \mathcal{S} + \mathcal{N},
\end{equation}
where $\mathcal{Y}, \mathcal{S}, \mathcal{N} \in \mathbb{R}^{n_1 \times n_2 \times n_3}$;
$\mathcal{S}$ denotes the sparse noise;
$\mathcal{N}$ denotes the Gaussian white noise.

HSI denoising aims to recover the ideal HSI $\mathcal{X}$ from the observed HSI $\mathcal{Y}$ in (\ref{eq_1}).
Under the framework of regularization theory, it can briefly be formulated as
\begin{equation}
\label{eq_2}
\begin{aligned}
\arg\min_{\mathcal{X}, \mathcal{S}, \mathcal{N}} & \texttt{Rank}(\mathcal{X}) + \lambda\|\mathcal{S}\|_1 + \tau\|\mathcal{N}\|_F^2, \\
s.t. & \mathcal{Y} = \mathcal{X} + \mathcal{S} + \mathcal{N},
\end{aligned}
\end{equation}
where $\|\cdot\|_1$ is $L_1$ norm to detect the sparse noise;
$\|\cdot\|_F$ describes the Gaussian noise;
$\texttt{Rank}(\cdot)$ represents the rank of unknown ideal HSI;
$\lambda$ and $\tau$  are non-negative parameters.

In model (\ref{eq_2}),  regularization term $\texttt{Rank}$ is approximated by different relaxations. 
As mentioned above, TNN is widely used convex relaxation, which can be defined as
\begin{equation}
\label{eq_3}
  \|\mathcal{X}\|_* := \frac{1}{n_3}\sum_{k=1}^{n_3}\|\bar{\textbf{X}}^{(k)}\|_*.
\end{equation}

\section{The Weighted TNN}

\subsection{Frequency-Weighted TNN}
In \eqref{eq_3}, the frontal slice of $\bar{\mathcal{X}}$ corresponds to the frequency component  of $\mathcal{X}$ \cite{wang2020frequency}. 
Specifically, for $\mathcal{X}$, its profile information is contained in the low-frequency frontal slices,
while its detailed information is contained in the high-frequency ones.
When $\mathcal{X}$ is distorted by outliers, the effects on high-frequency frontal slices are more severe.
However, different frequency slices of $\bar{\mathcal{X}}$ have the same impact on TNN in \eqref{eq_3},
which is obviously inconsistent with the physics meaning of frequency components.
Therefore, we improve TNN  in \eqref{eq_3} by assigning different weights for different frequency slices, and propose the frequency-weighted TNN as follows:
\begin{equation}
\label{eq_4}
  \|\mathcal{X}\|_{w*} := \sum_{k=1}^{n_3}w_{k}(\bar{\textbf{X}}^{(k)})\|\bar{\textbf{X}}^{(k)}\|_*,
\end{equation}
where $w_{k}(\bar{\textbf{X}}^{(k)})$ is the $k$-th weight parameter. 
For HSI denoising problems, the lower the frequencies are, the less the corresponding frequency slices should be punished.
By amounts of data simulations, the weights $w_{k}$ approximatively consist with the frequencies and are inversely proportionate to $\|\bar{\textbf{X}}^{(k)}\|_F$. 
We let
\begin{equation}
\label{eq_5}
  w_{k}(\bar{\textbf{X}}^{(k)}) = \frac{c_1}{\log(\|\bar{\textbf{X}}^{(k)}\|_F^2)+\varepsilon} +c_2,
\end{equation}
where $\varepsilon=10^{-10}$ is to avoid dividing by zero; 
$c_1$ is the scaling factor after frequency normalization; $c_2$ is a constant.
The proposed FWTNN is different from the frequency-filtered tensor nuclear norm (FTNN) \cite{wang2020frequency}.  Our weights are  data dependent, but FTNN's weights  are pre-weight.  
\subsection{Double-Weighted TNN} 

For $\bar{\textbf{X}}^{(k)}$ in \eqref{eq_3},  the matrix nuclear norm is used as the tightest convex surrogate for rank.
However, it has limitation in the accuracy of approximation due to its convexity.
Recently, a series of improvement methods are proposed for better approximation \cite{PSSV, WNNM}.
To differently treat singular values of $\bar{\textbf{X}}^{(k)}$, we choose partial sum of singular values (PSSV) to only punish the smaller singular values which mainly contain the noise information of HSIs.
Then, a double-weighted TNN is proposed by replacing the matrix nuclear norm in (\ref{eq_4}) with the PSSV of $\bar{\textbf{X}}^{(k)}$, which is defined as
\begin{equation}
\label{eq_6}
  \|\mathcal{X}\|_{dw*} := \frac{1}{n_3}\sum_{k=1}^{n_3}w_{k}(\bar{\textbf{X}}^{(k)})\|\bar{\textbf{X}}^{(k)}\|_{\textrm{PSSV}},
\end{equation}
where $\|\bar{\textbf{X}}^{(k)}\|_{\textrm{PSSV}}= \sum_{r=R+1}^{\min\{ n_1,n_2 \}} \sigma_r(\bar{\textbf{X}}^{(k)})$;
$\sigma_r(\bar{\textbf{X}}^{(k)})$ is the $r$-th biggest singular value of matrix $\bar{\textbf{X}}^{(k)}$;
$R$ is a parameter indicating the number of main singular values. The double-weighted TNN minimization problem can be solved by following theorem.

\begin{theorem} \label{DWSVT_theorem}
	Assuming that $\tau >0$, $\mathcal{X},\mathcal{Y}\in$ $\mathbb{R}^{n_1\times n_2\times n_3}$,
	for the minimization problem
	\begin{equation} \label{dwtnn_question}
	\mathcal{X}^*=\arg\min_{\mathcal{X}}\tau \lVert \mathcal{X} \rVert _{dw*}+\frac{1}{2}\lVert \mathcal{X}-\mathcal{Y} \rVert _{F}^{2},
	\end{equation}
	its solution is
	\begin{equation}
	\begin{array}{rl}
	\mathcal{X}^*=\mathcal{D}\mathcal{W}^{w,R,\tau}\left( \mathcal{Y} \right) = \texttt{ifft}(\bar{\mathcal{U}} \cdot \bar{\mathcal{S}}_{dw*} \cdot \bar{\mathcal{V}}^T, [], 3),
	\end{array}
	\end{equation}
	where $\bar{\mathcal{Y}}=\bar{\mathcal{U}} \cdot \bar{\mathcal{S}} \cdot \bar{\mathcal{V}}^T$;
	$
	\bar{\mathcal{S}}_{dw*}(r,r,k)= \max ( \bar{\mathcal{S}}(r,r,k)-\tau w_rw_k  ,0 );
	$, $R=TW(k)$
	$w_r =\left\{ \begin{array}{l}
	\begin{matrix}
	0,&		r\le R\\
	\end{matrix}\\
	\begin{matrix}
	1,&		r>R\\
	\end{matrix}\\
	\end{array} \right. $;
	$w_k = \frac{c_1}{\log ( \lVert \boldsymbol{\bar{\textbf{X}}}^{( k )} \rVert _F^2+\varepsilon ) } +c_2$. 
	\end{theorem}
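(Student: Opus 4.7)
The natural strategy is to push the problem into the Fourier domain along mode~3, where both the fidelity term and the regularizer decouple into $n_3$ independent frontal-slice problems, and then to invoke the known proximal operator of the partial sum of singular values.

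First, I would use Parseval's identity for the unnormalized FFT, $\|\mathcal{X}-\mathcal{Y}\|_F^2 = \frac{1}{n_3}\sum_{k=1}^{n_3}\|\bar{\mathbf{X}}^{(k)} - \bar{\mathbf{Y}}^{(k)}\|_F^2$, together with the definition of $\|\cdot\|_{dw*}$ in~\eqref{eq_6}, to rewrite \eqref{dwtnn_question} entirely in terms of the frequency slices $\bar{\mathbf{X}}^{(k)}$. After scaling out the common factor $1/n_3$, the objective becomes a sum of $n_3$ decoupled subproblems of the form
\begin{equation*}
\min_{\mathbf{Z}}\; \tau w_k \|\mathbf{Z}\|_{\mathrm{PSSV}} + \tfrac{1}{2}\|\mathbf{Z} - \bar{\mathbf{Y}}^{(k)}\|_F^2,
\end{equation*}
with $w_k$ held fixed (as is standard in reweighted schemes, $w_k$ is treated as a constant recomputed from the current iterate of $\bar{\mathbf{Y}}^{(k)}$).

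Second, I would apply the PSSV proximal operator result~\cite{PSSV}: writing the SVD $\bar{\mathbf{Y}}^{(k)} = \mathbf{U}\,\mathrm{diag}(\sigma)\,\mathbf{V}^T$, the unique minimizer is $\mathbf{U}\,\mathrm{diag}(\sigma^{*})\,\mathbf{V}^T$ with $\sigma_r^{*} = \sigma_r$ for $r\le R$ and $\sigma_r^{*} = \max(\sigma_r - \tau w_k, 0)$ for $r>R$. Absorbing the indicator $\mathbf{1}_{\{r>R\}}$ into the factor $w_r$ reproduces precisely the stated rule $\bar{\mathcal{S}}_{dw*}(r,r,k) = \max(\bar{\mathcal{S}}(r,r,k) - \tau w_r w_k, 0)$. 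Stacking these per-slice minimizers into the Fourier-domain tensor $\bar{\mathcal{U}}\cdot\bar{\mathcal{S}}_{dw*}\cdot\bar{\mathcal{V}}^T$ and applying $\texttt{ifft}(\cdot,[],3)$ returns the result to the spatial domain and yields $\mathcal{D}\mathcal{W}^{w,R,\tau}(\mathcal{Y})$.

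The conceptual obstacle, as opposed to the computational one, is that $w_k$ is itself a function of $\bar{\mathbf{X}}^{(k)}$, so the full problem in $\mathcal{X}$ is in fact nonconvex and the closed-form expression is only legitimate once $w_k$ is frozen. I would therefore preface the proof by reading the theorem in the usual reweighted sense, after which the derivation is a routine combination of Parseval's identity and the scalar PSSV prox applied frequency-slice by frequency-slice.
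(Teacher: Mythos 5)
Your proposal is correct and follows essentially the same route as the paper's own proof: Parseval's identity to decouple the objective into $n_3$ frequency-slice subproblems, followed by the PSVT operator of Lemma~\ref{PSVT_lemma} applied slice by slice, with the truncation indicator absorbed into $w_r$. Your explicit caveat that the closed form is only legitimate once the data-dependent weight $w_k$ is frozen is a point the paper's proof silently assumes rather than states, so if anything your treatment is slightly more careful.
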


\begin{lemma}(PSVT \cite{PSSV}). \label{PSVT_lemma}
	  Let  $\tau>0$, $l=\min (m, n)$  and  $\mathbf{X}, \mathbf{Y} \in   \mathbb{R}^{m \times n} $ which can be decomposed by SVD. $\mathbf{Y}$ can be considered as the sum of two matrices,  $\mathbf{Y}=\mathbf{Y}_{1}+\mathbf{Y}_{2}=\mathbf{U}_{Y 1} \mathbf{D}_{Y 1} \mathbf{V}_{Y 1}^{\top}+   \mathbf{U}_{Y 2} \mathbf{D}_{Y 2} \mathbf{V}_{Y 2}^{\top}$ , where  $\mathbf{U}_{Y 1}, \mathbf{V}_{Y 1}$  are the singular vector matrices corresponding to the  $R$  largest singular values by  S V D , and  $\mathbf{U}_{Y 2}$, $\mathbf{V}_{Y 2}$  from the  $(R+1)$ -th to the last singular values. Define a minimization problem for the PSSV as
	\begin{equation}\label{PSVT_lemma02}
	\underset{\mathbf{X}}{\arg \min } \frac{1}{2}\|\mathbf{X}-\mathbf{Y}\|_{F}^{2}+\tau\|\mathbf{X}\|_{\textrm{PSSV}}
	\end{equation}
	Then, the optimal solution of Eq.\eqref{PSVT_lemma02} can be expressed by the PSVT operator defined as:
	\begin{equation}\label{PSVT_lemma_01}
	\begin{aligned}
	\mathbb{P}_{R, \tau}[\mathbf{Y}] &=\mathbf{U}_{Y}\left(\mathbf{D}_{Y 1}+\mathcal{S}_{\tau}\left[\mathbf{D}_{Y 2}\right]\right) \mathbf{V}_{Y}^{\top} \\
	&=\mathbf{Y}_{1}+\mathbf{U}_{Y 2} \mathcal{S}_{\tau}\left[\mathbf{D}_{Y 2}\right] \mathbf{V}_{Y 2}^{\top}
	\end{aligned}
	\end{equation}
	where 
	\begin{equation}
	\begin{array}{l} 
	\mathbf{D}_{Y 1}=\operatorname{diag}\left(\sigma_{1}, \cdots, \sigma_{R}, 0, \cdots, 0\right) \\
	\mathbf{D}_{Y 2}=\operatorname{diag}\left(0, \cdots, 0, \sigma_{R+1}, \cdots, \sigma_{l}\right)
	\end{array}
	\end{equation}
	and  $\mathcal{S}_{\tau}[x]= \max (|x|-\tau, 0)$  is the soft-thresholding operator \cite{L1_Solve}. 
\end{lemma}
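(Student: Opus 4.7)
My plan is to pass to the Fourier domain along mode $3$, use Parseval's identity to decouple the tensor proximal problem into $n_3$ independent matrix proximal problems, one per frequency slice, and then invoke Lemma~\ref{PSVT_lemma} on each of them.

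First, since the mode-$3$ FFT used here is unnormalized, Parseval gives
\begin{equation*}
\|\mathcal{X}-\mathcal{Y}\|_F^2 \;=\; \frac{1}{n_3}\sum_{k=1}^{n_3}\|\bar{\mathbf{X}}^{(k)}-\bar{\mathbf{Y}}^{(k)}\|_F^2.
\end{equation*}
Combined with the definition \eqref{eq_6} of $\|\cdot\|_{dw*}$, and treating the scalars $w_k>0$ as fixed inputs to the proximal step (the standard reweighting convention, consistent with how \eqref{eq_5} is used inside the ADMM loop), the objective of \eqref{dwtnn_question} equals
\begin{equation*}
\frac{1}{n_3}\sum_{k=1}^{n_3}\Bigl[\,\tau w_k\,\|\bar{\mathbf{X}}^{(k)}\|_{\textrm{PSSV}}+\tfrac{1}{2}\|\bar{\mathbf{X}}^{(k)}-\bar{\mathbf{Y}}^{(k)}\|_F^2\,\Bigr],
\end{equation*}
which is completely separable across the frequency index $k$ once one adopts $\bar{\mathcal{X}}$ as the optimization variable (legitimate since the mode-$3$ FFT is a bijection with constant Jacobian).

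Second, for every fixed $k$ I apply Lemma~\ref{PSVT_lemma} with effective threshold $\tau w_k$ to the resulting matrix subproblem. Writing the SVD $\bar{\mathbf{Y}}^{(k)}=\bar{\mathbf{U}}^{(k)}\bar{\mathbf{S}}^{(k)}(\bar{\mathbf{V}}^{(k)})^{\top}$ (precisely the $k$-th frontal slices of the factors $\bar{\mathcal{U}},\bar{\mathcal{S}},\bar{\mathcal{V}}$ in the statement), the PSVT operator leaves the top-$R$ singular values untouched and soft-thresholds the remaining ones by $\tau w_k$. Using the indicator $w_r$ ($0$ for $r\le R$, $1$ for $r>R$), the $(r,r)$ diagonal entry of the minimizer can be written uniformly as $\max(\sigma_r-\tau w_r w_k,0)$, which is precisely $\bar{\mathcal{S}}_{dw*}(r,r,k)$ in the statement of the theorem.

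Finally, stacking the slicewise minimizers in the Fourier domain yields the tensor $\bar{\mathcal{U}}\cdot\bar{\mathcal{S}}_{dw*}\cdot\bar{\mathcal{V}}^{\top}$, and applying the inverse FFT along mode $3$ gives the claimed closed form for $\mathcal{X}^{*}$. The one delicate point is the separation step: one must isolate each $w_k$ as a known positive constant so that the problem is genuinely a sum of independent matrix PSSV proximal problems and the $w_k$-dependence in \eqref{eq_5} is not circular. Once that is granted, the result follows from Parseval plus a direct slicewise application of Lemma~\ref{PSVT_lemma}, with no deeper obstacle.
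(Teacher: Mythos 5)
Your proposal follows essentially the same route as the paper's own argument (which, although typeset after the lemma, is really the proof of Theorem~\ref{DWSVT_theorem}): Parseval's identity separates the objective into $n_3$ frequency-slice subproblems, each a PSSV proximal problem solved by the PSVT operator with threshold $\tau w_k$, and the slice-wise minimizers are reassembled via the indicator $w_r$ and the inverse FFT. The only difference is that you explicitly freeze the data-dependent weights $w_k$ as constants to avoid circularity --- a point the paper's proof passes over silently --- which is a welcome clarification but not a different method.
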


\begin{proof}
	Let $\tau>0$, $\mathcal{X},\mathcal{Y}\in$ $\mathbb{R}^{n_1\times n_2\times n_3}$, $MR\in\mathbb{R}^{n_3} $ is  multi-rank of $\mathcal{X}$, and $c_1, c_2$ are given constants. 
	According to the definition of DWTNN \eqref{eq_6} and the properties of tensors in the Fourier domain, the Eq. \eqref{dwtnn_question} is equivalent to
	\begin{equation}\label{DWTNN_proof01}
	\begin{aligned}
	&\mathcal{X}^{*} =\arg \min _{\mathcal{X}} \tau\|X\|_{d w *}+\frac{1}{2}\|\mathcal{X}-\mathcal{Y}\|_{F}^{2}, \\
	&=\arg \min _{\mathcal{X}} \tau \frac{1}{n_{3}} \sum_{k=1}^{n_{3}} w_{k}\left(\overline{\mathbf{X}}^{(k)}\right)\left\|\overline{\mathbf{X}}^{(k)}\right\|_\textrm{PSSV}+\frac{1}{2 n_{3}}\|\overline{\mathcal{X}}-\overline{\mathcal{Y}}\|_{F}^{2} \\
	&=\arg \min _{\mathcal{X}} \frac{1}{n_{3}} \sum_{k=1}^{n_{3}} \tau w_{k}\left(\overline{\mathbf{X}}^{(k)}\right)\left\|\overline{\mathbf{X}}^{(k)}\right\|_{\textrm{PSSV}}+\frac{1}{2}\left\|\overline{\mathbf{X}}^{(k)}-\overline{\mathbf{Y}}^{(k)}\right\|_{F}^{2} .
	\end{aligned}
	\end{equation}
	Therefore, the Eq. \eqref{DWTNN_proof01} can be divided into $n_3$ subproblems as follows:
	\begin{equation}\label{DWTNN_proof02}
	\arg \min _{\mathcal{X}} \tau w_{k}\left(\overline{\mathbf{X}}^{(k)}\right)\|\overline{\mathbf{X}}\|_{p=R}+\frac{1}{2}\|\overline{\mathbf{X}}-\overline{\mathbf{Y}}\|_{F}^{2},
	\end{equation}
	the Eq.\eqref{DWTNN_proof02} can be solved by Lemma \ref{PSVT_lemma}. 
	Let 
	\begin{equation}\label{DWTNN_proof03}
	w_{r}=\left\{\begin{array}{ll}
	0, & i \leqslant R \\
	1, & i>R
	\end{array}\right.
	\end{equation}
	Base on Eq.\eqref{DWTNN_proof03}, Eq.\eqref{PSVT_lemma_01} in Lemma \ref{PSVT_lemma} is equivalent to
	\begin{equation}
	\begin{aligned}
	\mathbb{P}_{R,\tau, w_k }\left[ \overline{\mathbf{Y} }\right] =\overline{\mathbf{U}}_Y\left( \mathcal{S}_{R,\tau, w_k }\left[\overline{ \mathbf{D} }\right] \right) \overline{\mathbf{V}}_{Y}^{\top}
	\end{aligned}
	\end{equation}
	where 
	\begin{equation}
	\begin{aligned}
	\mathcal{S}_{R,\tau, w_k }\left[ \overline{\mathbf{D}}_i \right] 
	&=\begin{cases}
	\sigma _i\left( \overline{\mathbf{Y}} \right)&		\text{if}\  i<R+1\\
	\max \left( \sigma _i\left( \overline{\mathbf{Y}} \right) -\tau w_k,0 \right)&		\text{otherwise}\\
	\end{cases}\\
	&=\max \left( \sigma _i\left( \overline{\mathbf{Y}} \right) -\tau w_kw_r,0 \right) \\
	\end{aligned}
	\end{equation}
	Therefore, the solution of Eq. \eqref{dwtnn_question} is
	\begin{equation}
	\begin{array}{rl}
	\mathcal{X}^*=\mathcal{D}\mathcal{W}^{w,R,\tau}\left( \mathcal{Y} \right) = \texttt{ifft}(\bar{\mathcal{U}} \cdot \bar{\mathcal{S}}_{dw*} \cdot \bar{\mathcal{V}}^T, [], 3),
	\end{array}
	\end{equation}
	where $\bar{\mathcal{Y}}=\bar{\mathcal{U}} \cdot \bar{\mathcal{S}} \cdot \bar{\mathcal{V}}^T$;
	$
	\bar{\mathcal{S}}_{dw*}(r,r,k)= \max ( \bar{\mathcal{S}}(r,r,k)-\tau w_rw_k  ,0 );
	$, $R=TW(k)$
	$w_r =\left\{ \begin{array}{l}
	\begin{matrix}
	0,&		r\le R\\
	\end{matrix}\\
	\begin{matrix}
	1,&		r>R\\
	\end{matrix}\\
	\end{array} \right. $;
	$w_k = \frac{c_1}{\log ( \lVert \boldsymbol{\bar{\textbf{X}}}^{( k )} \rVert _F^2+\varepsilon ) } +c_2$.
\end{proof}
\begin{remark}
	In theorem \ref{DWSVT_theorem}, there are two key weights that need to be given in advance. 
	Frequency weights $w_k$ can be calculated by Eq. \eqref{eq_5}. 
	Truncation weight $w_r$ can be calculated by Algorithm 1. 
	It refers to the way multi-rank is calculated in reference \cite{PSTNN}. 
	In Algorithm 1, the ratio of the maximum value of the singular values is chosen, which can also be replaced by the ratio of the sum of the singular values. 
\end{remark}

\begin{algorithm}[htbp]\label{truncation_weight01}
		\caption{Calculate truncation weight $TW \in \mathbb{R}^{n_3}$. }
	{\bf Input:} A tensor $\mathcal{Y}$; ratio $\eta $. \\
	{\bf Output:}Truncation weight $TW \in \mathbb{R}^{n_3}$. \\
	\hspace*{0.02in} 1: Compute  $\overline{\mathcal{Y}}=\operatorname{fft}(\boldsymbol{Y},[], 3)$ . \\
	\hspace*{0.02in} 2: $\text{for}\  i=1, \cdots, n_3 \ \text{do} $ \\
	\hspace*{0.4in} $[\overline{\mathbf{U}}^{(i)}, \overline{\mathbf{S}}^{(i)}, \overline{\mathbf{V}}^{(i)} ]$  \\
	\hspace*{0.4in} $ s=\text{diag}(\overline{\mathbf{S}}^{(i)}) $
	  \\ 
	  \hspace*{0.4in} $ TW(i)= \text{length}(s(s>max(s)\times \eta)) $. 
	  \\ 
	 \hspace*{0.2in} $\text{end for} $
\end{algorithm}

\subsection{Multi-modal and Double-Weighted TNN}
TNN in \eqref{eq_3} only approximates the correlations connected by mode-3 Fourier transform in the spatial dimensions with the spectral dimension.
It lacks of flexibility for describing low-rankness in all modes of HSIs.
To connect the $p$-th mode with other two modes, we can define the double-weighted TNN for each mode-$p$ permutation of  HSIs, i.e., $\|\mathcal{X}_{p}\|_{dw*}, p=1,2,3.$
As $\|\mathcal{X}_p\|_{dw*}$ are different according to different modes,
we use the weighted average of double-weighted TNNs along all modes to approximate the tensor rank of HSIs.
Finally, the multi-modal and double-weighted TNN (MDWTNN) is proposed as follows:
\begin{equation}
\label{eq_8}
\begin{aligned}
  \|\mathcal{X}\|_{mdw*} :=\sum_{p=1}^{3}\alpha_p\|\mathcal{X}_{p}\|_{dw*}=\sum_{p=1}^{3} \sum_{k=1}^{n_p}\alpha_p w_k^p\|\bar{\textbf{X}}^{(k)}_{p}\|_{\textrm{PSSV}},
\end{aligned}
\end{equation}
 where $\bar{\mathcal{X}}_{p}=\texttt{fft}(\mathcal{X}_{p},[],3)$;
 $\bar{\textbf{X}}^{(k)}_{p}$ is the $k$-th frontal slice of $\bar{\mathcal{X}}_{p}$ and its assigned weight is $w_{k}^{p}$;
 $\alpha_p>0$ and $\sum_{p=1}^3\alpha_p=1$.
 Fig. \ref{fig:mdwtnnshiytu04} shows the schematic diagram of MDWTNN. 
\begin{figure*}
	\centering
	\includegraphics[width=0.95\linewidth]{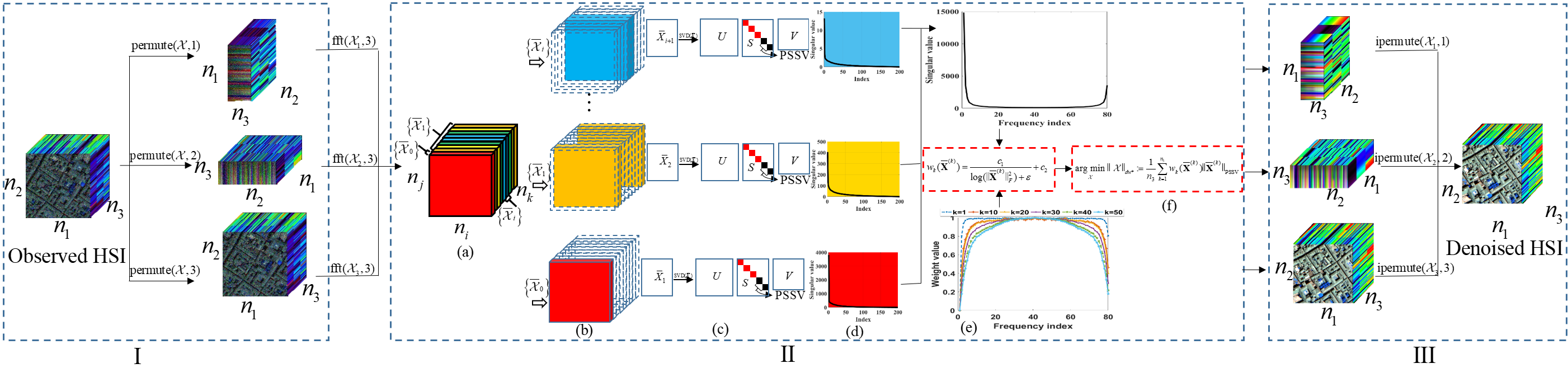}
	\caption{  MDWTNN Schematic diagram. (I) Multi-modal permutations and Fourier transforms.  (II) Double-weighted TNN. (a) mode-3 Fourier transform of $\mathcal{X}_p,p=1,2,3$. Due to the repeatability of the operation, II only represents one of $\mathcal{X}_1$, $\mathcal{X}_2$, $\mathcal{X}_3$, and the other two only need to repeat the operation represented by II.  (b) different frequency components. (c)  PSSV relaxation on each frequency slice $\{\bar{\mathcal{X}}_i\}$. (d) The singular values of  frequency slices. (e) Frequency weights. The top is the sum of singular values in each frequency slice, and the bottom is the weight of each frequency slice. (f) Doubel-weighted TNN minimization with respect to  $\mathcal{X}_p$. (III)  Synthesis from multi-modal denoised results. }   
	\label{fig:mdwtnnshiytu04}
\end{figure*}

\section{HSI denoising via MDWTNN Minimization}

MDWTNN in (\ref{eq_8}) takes full advantage of physical meanings in frequency components, singular values, and modes of HSIs,
which can provide a better approximation to the tensor rank.
Then we use MDWTNN to replace the regularization term $\texttt{Rank}$ in (\ref{eq_2}) and propose the HSI denoising model as follows: 
\begin{equation}
\label{MDWTNN_main}
\begin{array}{rl}
\arg\min_{\mathcal{X}, \mathcal{S}, \mathcal{N}} &\|\mathcal{X}\|_{mdw*}+ \lambda\|\mathcal{S}\|_1 + \tau\|\mathcal{N}\|_F^2, \\
s.t. & \mathcal{Y} = \mathcal{X} + \mathcal{S} + \mathcal{N}.
\end{array}
\end{equation}

Introducing auxiliary variables, model \eqref{MDWTNN_main} is equivalent to
\begin{equation}
\label{MDWTNN_main2}
\begin{array}{rl}
\displaystyle \arg \min_{\mathcal{X},\mathcal{S},\mathcal{N}} \sum_{p=1}^{3}\alpha_p \|\mathcal{Z}_{p}\|_{dw*}+\lambda \lVert \mathcal{S} \rVert _1+\tau \lVert \mathcal{N} \rVert _{F}^{2},\\
\displaystyle  s.t.\ \mathcal{Y}=\mathcal{X}+\mathcal{S}+\mathcal{N},\ \mathcal{Z}_p=\mathcal{X}_p,\ p=1,2,3.\\
\end{array}
\end{equation}
By augmented Lagrangian multiplier method, the Lagrangian function of model \eqref{MDWTNN_main2} can be written as
\begin{equation}
\nonumber
\begin{array}{l}
L_{\mu _p,\beta}\left( \mathcal{X},\mathcal{Z}_p,\mathcal{N},\mathcal{S},\Gamma _p,\Lambda \right) = \lambda \lVert \mathcal{S} \rVert _1+\tau \lVert \mathcal{N} \rVert _{F}^{2} \\
+ <\mathcal{Y}-\left( \mathcal{X}+\mathcal{S}+\mathcal{N} \right) ,\Lambda > +\frac{\beta}{2}\lVert \mathcal{Y}-\left( \mathcal{X}+\mathcal{S}+\mathcal{N} \right) \rVert _{F}^{2} , \\
\displaystyle+\sum_{p=1}^3{\left\{ \alpha _p\lVert \mathcal{X}_p \rVert _{dw*}+<\mathcal{X}_p-\mathcal{Z}_p,\Gamma _p>+\frac{\mu _p}{2}\lVert \mathcal{X}_p-\mathcal{Z}_p \rVert _{F}^{2} \right\}},\\
\end{array}
\end{equation}
where $\Lambda$ and $\Gamma_p$ are the Lagrangian multipliers;
$\beta$ and $\mu _p$ are the Lagrange penalty parameters.
Its minimization problem can be efficiently solved in the framework of ADMM \cite{ADMM}.
At the $(n+1)$-th iteration, each variable in the Lagrangian function can be updated by solving its corresponding subproblem respectively when other variables are fixed at the $n$-th iteration.

For $\mathcal{Z}_p$, $p=1,2,3$, their corresponding subproblems can be written as
\begin{equation}
\label{update_Zp}
\arg\min_{\mathcal{Z}_{p}} \alpha _p\lVert \mathcal{Z}_p \rVert _{dw*}+\frac{\mu _p}{2}\left\| \mathcal{Z}_p-\left( \mathcal{X}_{p}^{n}+\frac{\Gamma _{p}^{n}}{\mu _p} \right) \right\| _{F}^{2}.
\end{equation}
The closed-form solution of \eqref{update_Zp} obtained from theorem \ref{DWSVT_theorem} are as follows:
\begin{equation}
\begin{array}{rl}
\mathcal{Z}_{p}^{n+1}=\mathcal{D}\mathcal{W}^{w\left( \mathcal{X}_p^n \right) ,R,\frac{\alpha _p}{\mu _p}}\left( \mathcal{X}_{p}^{n}+\frac{\Gamma _{p}^{n}}{\mu _p} \right). \\
\end{array}\label{updateZp}
\end{equation}

For  $ \mathcal{X} $, its corresponding subproblem can be written as
\begin{equation}
\begin{aligned}
\mathcal{X}^{n+1}=
&\arg\min_{\mathcal{X}} \sum_{p=1}^3{\frac{\mu _p}{2}}\lVert \mathcal{X}-\mathcal{Z}_{p}^{n+1}+\frac{\Gamma _{p}^{n}}{\mu _p} \rVert _{F}^{2}\\
&+\frac{\beta}{2}\lVert \mathcal{Y}-\left( \mathcal{X}+\mathcal{S}^n+\mathcal{N}^n \right) +\frac{\Lambda ^n}{\beta} \rVert _{F}^{2}.
\end{aligned}
\end{equation}
It has the closed-form solution as follows:
\begin{equation}
\begin{array}{rl}
\mathcal{X}^{n+1}=\frac{\sum_{p=1}^3{\mu _p}\left( \mathcal{Z}_{p}^{n+1}-\frac{\Gamma _{p}^{n}}{\mu _p} \right) +\beta \left( \mathcal{Y}-\mathcal{S}^n-\mathcal{N}^n+\frac{\Lambda ^n}{\beta} \right)}{1+\beta}. \\
\end{array}\label{update_X}
\end{equation}

For  $ \mathcal{S} $, its corresponding subproblem can be written as
\begin{equation}
\label{solve_S}
\arg\min_{\mathcal{S}}\lambda \lVert \mathcal{S} \rVert _1+\frac{\beta}{2}\lVert \mathcal{Y}-\left( \mathcal{X}^{n+1}+\mathcal{S}+\mathcal{N}^{n} \right) +\frac{\Lambda ^n}{\beta} \rVert _{F}^{2}.
\end{equation}
It can be solved by the soft-thresholding operator \cite{L1_Solve} as:
\begin{equation}
\begin{array}{rl}
\mathcal{S}^{n+1}=\texttt{shrink}\left( \mathcal{Y}-\mathcal{X}^{n+1}-\mathcal{N}^{n+1}+\frac{\Lambda ^n}{\beta},\frac{\lambda}{\beta} \right).
\end{array}\label{update_S}
\end{equation}

For  $ \mathcal{N} $, its corresponding subproblem can be written as
\begin{equation}
\arg\min_{\mathcal{N}} \tau \|\mathcal{N}\|_{F}^{2}+\frac{\beta}{2}\|\mathcal{Y}-\left( \mathcal{X}^{n+1}+\mathcal{S}^{n+1}+\mathcal{N} \right) +\frac{\Lambda ^n}{\beta}\|_{F}^{2}.
\end{equation}
It has the closed-form solution as follows : 
\begin{equation}
\begin{array}{rl}
\mathcal{N}^{n+1}=\frac{\beta \left( \mathcal{Y}-\mathcal{X}^{n+1}-\mathcal{S}^n+\frac{\Lambda ^n}{\beta} \right)}{2\tau +\beta}.
\end{array}\label{update_N}
\end{equation}

For multipliers $\Gamma_{p}$ and $\Lambda$, they can be updated as follows: 
\begin{equation}
\left\{ \begin{array}{l}
\Gamma _{p}^{n+1}=\Gamma _{p}^{n}+\mu _p\left( \mathcal{Z}_p^{n+1}-\mathcal{X}^{n+1} \right), p=1,2,3\\
\Lambda ^{n+1}=\Lambda ^n+\beta \left( \mathcal{Y}-\mathcal{X}^{n+1}-\mathcal{S}^{n+1}-\mathcal{N}^{n+1} \right). \\
\end{array} \right. \label{update_multiplier}
\end{equation}

The proposed algorithm for our HSI denoising model is summarized in Algorithm 2.
\begin{algorithm}[htbp]
	\caption{HSI denoising via the MDWTNN minimization}
	{\bf Input:} The observed tensor $\mathcal{Y}$; weight parameters $c_1$, $c_2$, ratio $\eta $;  regularization parameters $\lambda$, $\tau$; and  stopping criterion $\epsilon$. \\
	{\bf Output:} Denoised image $\mathcal{X}$. \\
	\hspace*{0.02in} 1: Initialize: $\mathcal{Y}$=$\mathcal{X}$=$\mathcal{S}$=$\mathcal{N}$=$\mathcal{Z}_p$; $\Gamma _p=\Lambda =0$;  $\mu_p$=$\beta $=$10^{-3}$; \\
	\hspace*{0.07in}$p=1,2,3$; $\mu_{max}=10^{10}$;  $\rho=1.2$ and $n=0$. \\
	\hspace*{0.02in} 2: Repeat until convergence:\\
	\hspace*{0.02in} 3. Update $ \mathcal{Z}_p $,$ \mathcal{X} $,$ \mathcal{S} $,$ \mathcal{N} $,$ \mathcal{X} $ $ \Gamma _{p}, \Lambda $ by \eqref{updateZp}, \eqref{update_X}, \eqref{update_S}, \eqref{update_N}, \eqref{update_multiplier} \\
	\hspace*{0.2in} Update $\mu_p=\rho\mu_p$, $\beta=\rho\beta$, $w_k$ by \eqref{eq_5}\\
	\hspace*{0.02in} 4: Check the convergence condition.
\end{algorithm}
\section{Experiments}
To verify the effectiveness of our MDWTNN based HSI denoising model, various experiments were performed on two challenging simulated datasets and two real HSI datasets.
For comparison, four state-of-the-art HSI denoising methodes were employed as the benchmark in the experiments, i.e., BM4D \cite{BM4D}, LRMR \cite{LRMR},  LRTDTV \cite{LRTDTV} and 3DTNN \cite{3DTNN}.
Since the BM4D method was only suitable to remove Gaussian noise, we implemented it on HSIs which were preprocessed by the RPCA denoising method \cite{lu2019TRPCA}.

\subsection{ Simulated Data Experiments } 

In the simulation experiments, we selected two datasets.
From the Washington DC Mall dataset\footnote{\url{http://lesun.weebly.com/hyperspectral-data-set.html}}, we chose a sub-block with the size of $256 \times 256\times 191$ as a simulation dataset.
From the Pavia City Center dataset\footnote{\url{http://www.ehu.eus/ccwintco/index.php/}}, we chose a sub-block with the size of $200 \times 200\times 80$ as a simulation dataset. 
The hybrids of white Gaussian and impulse noises with 5 different intensity levels were added to the simulation datasets band by band.
Let $G$ and $P$ denoted the variance of Gaussian white noise and percentage of impulse noise, respectively.
In noise case 1-3, the same intensity noise was added to all the bands.
In noise case 1, $G$=0.1 and $P$=0.2;
In noise case 2, $G$=0.2 and $P$=0.2;
In noise case 3, $G$=0.1 and $P$=0.4;
In noise case 4 and 5, the noise intensities were different for different bands.
In noise case 4, $G$ was randomly selected from 0.1 to 0.2 and $P$=0.2;
In noise case 5, $G$=0.1 and $P$ was randomly selected from 0.2 to 0.4.

For quantitatively evaluating the denoised results of all the test methods,
the CPU times and the means of PSNR, SSIM  and SAM in each band, i.e., MPSNR, MSSIM and MSAM, were listed in Table \ref{tab:model2}.
Although the CPU times of our model were not the shortest, one could update $\mathcal{Z}_p$ by \eqref{updateZp} in parallel to further shorten the CPU times of our model.
For visual evaluation in Fig. \ref{DCMall_imshowG01SPrand2040_80}, 
we showed the denoised results of the Pavia City Center dataset in Case 1.

\begin{table}[htbp]
	\centering
	\caption{Quantitative comparison and time of all competing methods under different levels of noises on simulate dataset. }
	\scalebox{0.68}{
		\begin{tabular}{ccccccccc}
			\toprule
			Dataset & Noise case & Index & Noise & BM4D  & LRMR  & LRTDTV & 3DTNN & Our \\
			\midrule
			\multirow{20}[10]{*}{\makecell[c]{Washington \\ DC Mall}} & \multirow{4}[2]{*}{Case1} & PSNR  & 11.068  & 31.014  & 31.567  & 32.800  & \underline{34.270 } & \textbf{36.095 } \\
			&       & SSIM  & 0.085  & 0.893  & 0.867  & 0.896  & \underline{0.936}  & \textbf{0.950 } \\
			&       & MSAM  & 43.139  & 4.576  & 5.042  & 4.327  & \underline{3.481 } & \textbf{2.937 } \\
			&       & time  & -  & 547.005  & 378.898  & 538.160  & 270.211  & 334.656  \\
			\cmidrule{2-9}          & \multirow{4}[2]{*}{Case2} & PSNR  & 10.216  & 27.192  & 27.642  & \underline{29.489}  & 29.055  & \textbf{32.525 } \\
			&       & SSIM  & 0.061  & 0.791  & 0.743  & \underline{0.807}  & 0.801  & \textbf{0.894 } \\
			&       & MSAM  & 45.297  & 6.830  & 7.859  & \underline{6.382}  & 6.726  & \textbf{4.393 } \\
			&       & time  & -  & 529.243  & 395.461  & 584.461  & 309.977  & 378.728  \\
			\cmidrule{2-9}          & \multirow{4}[2]{*}{Case3} & PSNR  & 8.305  & 29.691  & 29.183  & \underline{30.270 } & 29.572  & \textbf{34.185 } \\
			&       & SSIM  & 0.037  & \underline{0.866}  & 0.798  & 0.844  & 0.784  & \textbf{0.928 } \\
			&       & MSAM  & 50.092  & \underline{5.264}  & 6.579  & 6.182  & 6.542  & \textbf{3.604 } \\
			&       & time  & - & 528.440  & 394.272  & 582.318  & 316.547  & 375.719  \\
			\cmidrule{2-9}          & \multirow{4}[2]{*}{Case4} & PSNR  & 10.648  & 28.970  & 29.518  & 31.073  & \underline{31.852}  & \textbf{34.379 } \\
			&       & SSIM  & 0.073  & 0.848  & 0.810  & 0.859  & \underline{0.887}  & \textbf{0.930 } \\
			&       & MSAM  & 44.265  & 5.695  & 6.439  & 5.468  & \underline{4.873  }& \textbf{3.579 } \\
			&       & time  & -  & 541.960  & 396.478  & 537.374  & 273.466  & 340.113  \\
			\cmidrule{2-9}          & \multirow{4}[2]{*}{Case5} & PSNR  & 9.669  & 30.419  & 30.412  & 31.560  & \underline{32.783 } & \textbf{35.180 } \\
			&       & SSIM  & 0.060  & 0.883  & 0.836  & 0.874  & \underline{0.902 } & \textbf{0.942 } \\
			&       & MSAM  & 47.270  & 4.865  & 5.761  & 5.405  & \underline{4.311}  & \textbf{3.220 } \\
			&       & time  & - & 546.378  & 397.474  & 541.124  & 277.123  & 340.162  \\
			\midrule
			\multirow{20}[10]{*}{\makecell[c]{Pavia City \\ Center}} & \multirow{4}[2]{*}{Case1} & MPSNR & 11.122 & 29.701 & 31.259 & \underline{32.297} & 31.696 & \textbf{33.951} \\
			&       & MSSIM & 0.105 & 0.920  & 0.905 & 0.914 &\underline{0.924}  & \textbf{0.942} \\
			&       & MSAM  & 45.712 & 5.84  & 6.824 & 4.93  & \underline{4.844} & \textbf{4.426} \\
			&       & time  & -     & 112.235 & 113.723 & 131.91 & 54.461 & 72.224 \\
			\cmidrule{2-9}          & \multirow{4}[2]{*}{Case2} & MPSNR & 10.265 & 25.619 & 27.321 & \underline{28.605} & 27.009 & \textbf{30.124} \\
			&       & MSSIM & 0.074 & \underline{0.835} & 0.791 & 0.821 & 0.799 & \textbf{0.879} \\
			&       & MSAM  & 46.978 & 7.170  & 8.385 & \underline{6.685} & 6.923 & \textbf{5.666} \\
			&       & time  & -    & 113.246 & 113.115 & 129.155 & 56.610 & 71.322 \\
			\cmidrule{2-9}          & \multirow{4}[2]{*}{Case3} & MPSNR & 8.384 & 27.745 & 28.937 &\underline{29.741}  & 27.696 & \textbf{31.924} \\
			&       & MSSIM & 0.043 & \underline{0.892} & 0.848 & 0.873 & 0.790  & \textbf{0.916} \\
			&       & MSAM  & 48.580 & 6.745 & 7.74  & \underline{6.007 }& 9.875 & \textbf{5.095} \\
			&       & time  & -     & 115.908 & 116.961 & 127.981 & 54.448 & 71.435 \\
			\cmidrule{2-9}          & \multirow{4}[2]{*}{Case4} & MPSNR & 10.659 & 27.361 & 29.094 & \underline{30.251} & 28.934 & \textbf{32.041} \\
			&       & MSSIM & 0.088 & \underline{0.877} & 0.853 & 0.870  & 0.860  & \textbf{0.915} \\
			&       & MSAM  & 46.457 & 6.6   & 7.796 & 5.877 & \underline{6.389} & \textbf{5.048} \\
			&       & time  & -     & 115.684 & 133.241 & 129.088 & 51.669 & 68.251 \\
			\cmidrule{2-9}          & \multirow{4}[2]{*}{Case5} & MPSNR & 9.565 & 28.674 & 30.044 &\underline{30.977}  & 30.055 & \textbf{33.038} \\
			&       & MSSIM & 0.070  & \underline{0.907 }& 0.878 & 0.893 & 0.879 & \textbf{0.932} \\
			&       & MSAM  & 47.885 & 6.293 & 7.333 &\underline{5.493} & 6.795 & \textbf{4.682} \\
			&       & time  & -     & 133.811 & 81.989 & 120.418 & 61.742 & 77.520 \\
			\bottomrule
		\end{tabular}%
		\label{tab:model2}%
	}
\end{table}%

\begin{figure*}[htbp] \centering
	\setlength{\abovecaptionskip}{-0.03cm}
	\setlength{\belowcaptionskip}{-0.5cm}
	\subfloat[{\scriptsize Original image}]
	{
		\includegraphics[width=0.25\columnwidth]{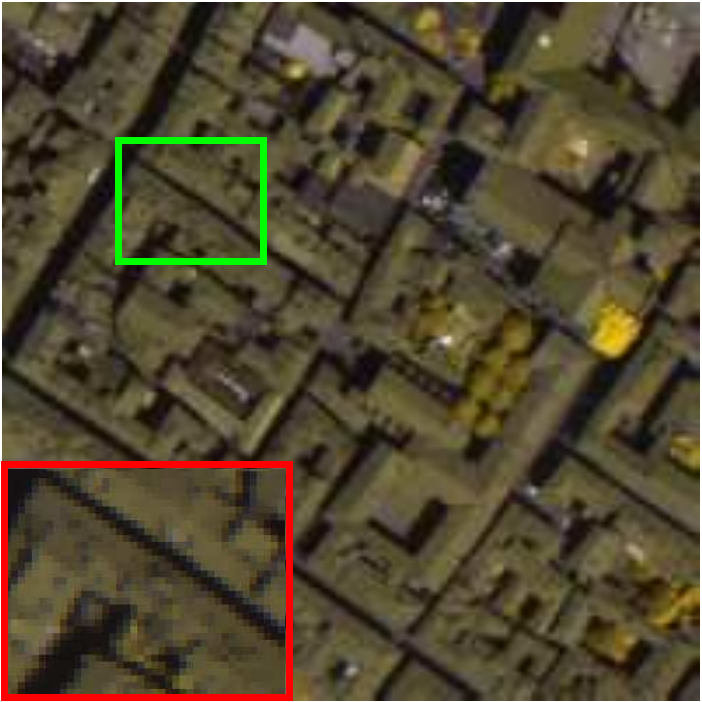}
	}\hfil
	\subfloat[{\scriptsize  Noise image}]
	{
		
		\includegraphics[width=0.25\columnwidth]{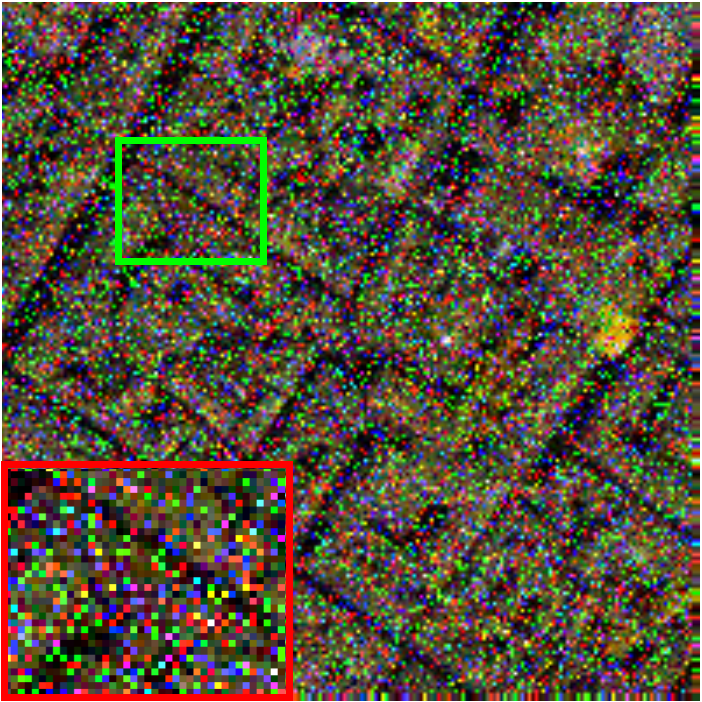}
	}\hfil
	\subfloat[{\scriptsize  BM4D(24.53dB)}]
	{
		\includegraphics[width=0.25\columnwidth]{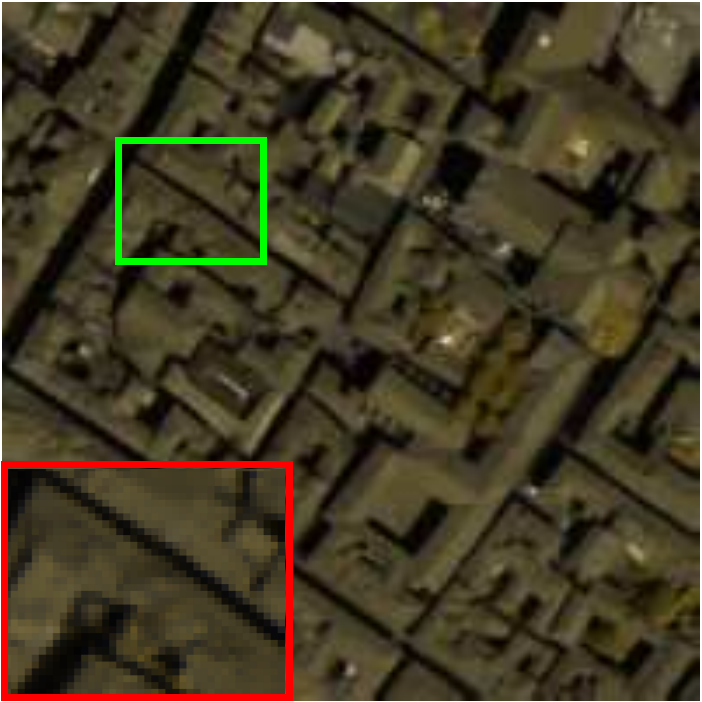}
	}\hfil
	\subfloat[{\scriptsize  LRMR(27.84dB)}]
	{
		
		\includegraphics[width=0.25\columnwidth]{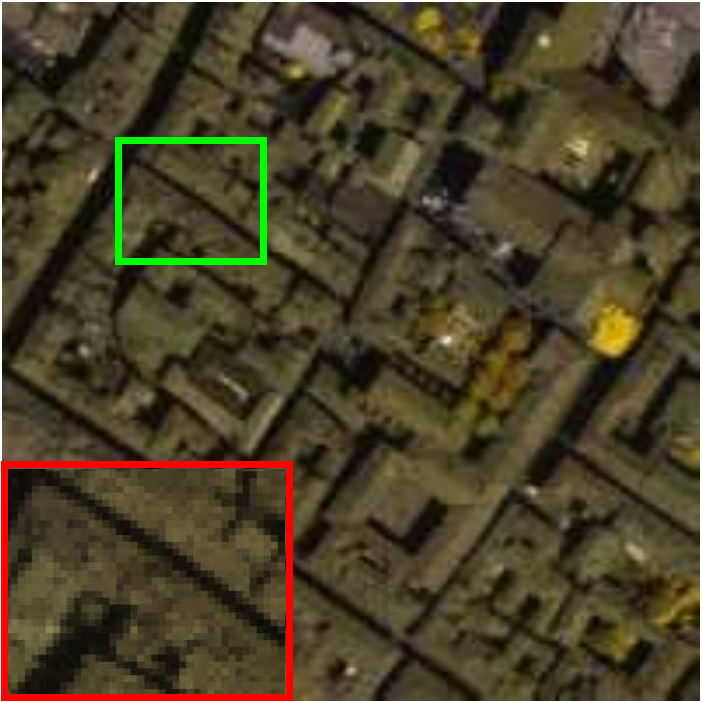}
	}\hfil
	\subfloat[  {\scriptsize  LRTDTV (27.13dB)} ]
	{
		
		\includegraphics[width=0.25\columnwidth]{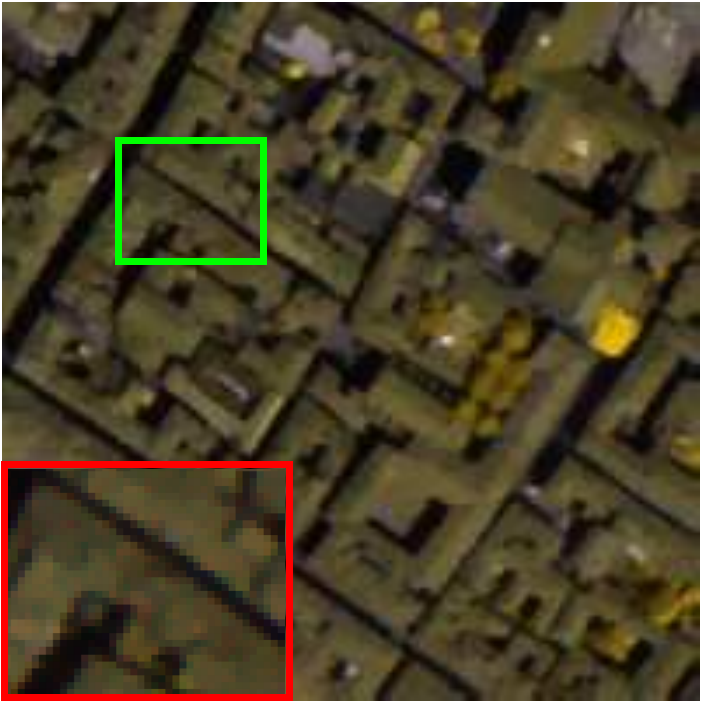}
	}\hfil
	\subfloat[{\scriptsize  3DTNN(27.55dB)}]
	{
		
		\includegraphics[width=0.25\columnwidth]{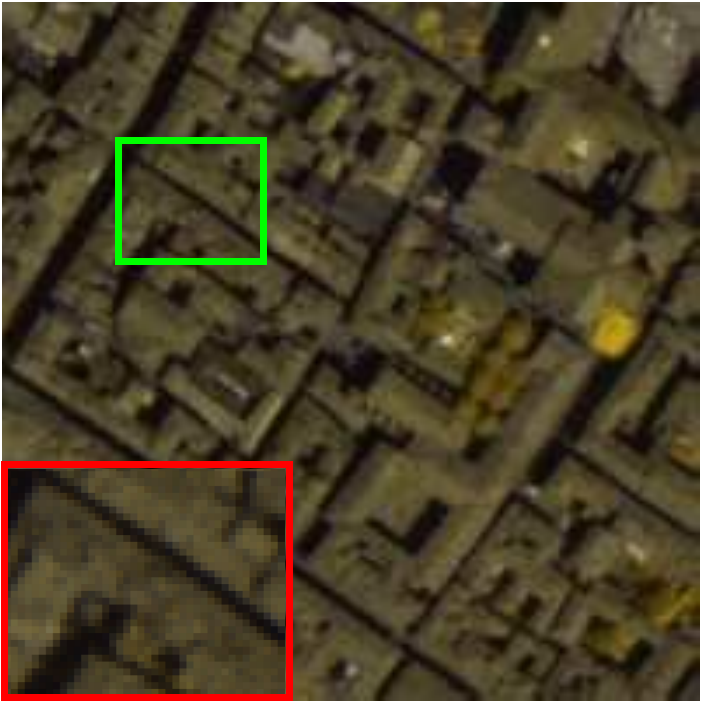}
	}\hfil
	\subfloat[{\scriptsize  Our(28.08dB)}]
	{
		
		\includegraphics[width=0.25\columnwidth]{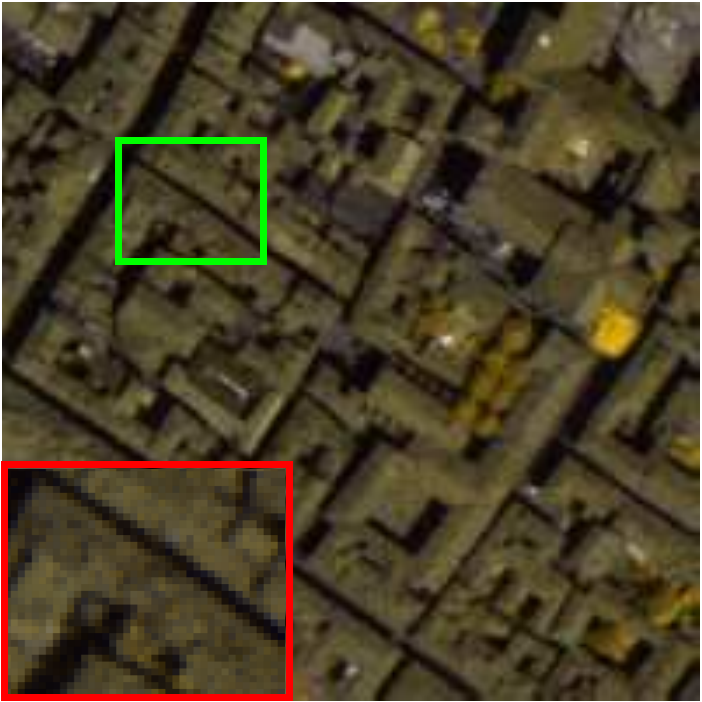}
	}\hfil
	\caption{The denoised results of the Pavia City Center dataset in Case 1.  Pseudocolor image with bands (78, 58, 14).} 
	\label{DCMall_imshowG01SPrand2040_80}
\end{figure*}

\subsection{  Real Data Experiments } 
 In the real experiments, we selected two datasets.
From the Indian Pines  dataset\footnote{\url{https://engineering.purdue.edu/∼biehl/MultiSpec/hyperspectral}}, we chose a sub-block with the size of $145 \times 145\times 224$.
From the Australian dataset\footnote{\url{http://remote-sensing.nci.org.au/u39/public/html/index.shtml}}, we chose a sub-block with the size of $200 \times 200\times 150$.
In Fig. \ref{real_Indian150_Australian48}, we listed all denoised results of the Indian Pines and Australian datasets.
For BM4D and LRTDTV, although they could remove more noise, they also lost more details. This made the denoised result too smooth.
For LRMR and 3DTNN, they mainly used the low-rank information of HSI. Although they retained more details, they also retained more noise.
Compared with them, our proposed model could remove more noise while retaining more details.
Fig. \ref{real_Indian218_DN} showed that the vertical mean profiles of band 218 before and after denoising. 
Here, one could see that the curve of the proposed MDWTNN method was most stable.  

\begin{figure*}[htbp] \centering
	\setlength{\abovecaptionskip}{-0.05cm}
	\subfloat[Real band] {
		
		\includegraphics[width=0.25\columnwidth]{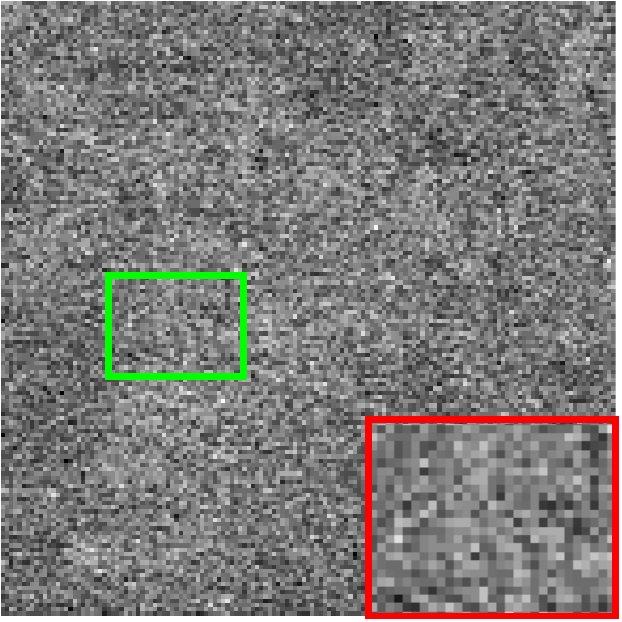}
	}\hfil
%
	\subfloat[BM4D] {
		
		\includegraphics[width=0.25\columnwidth]{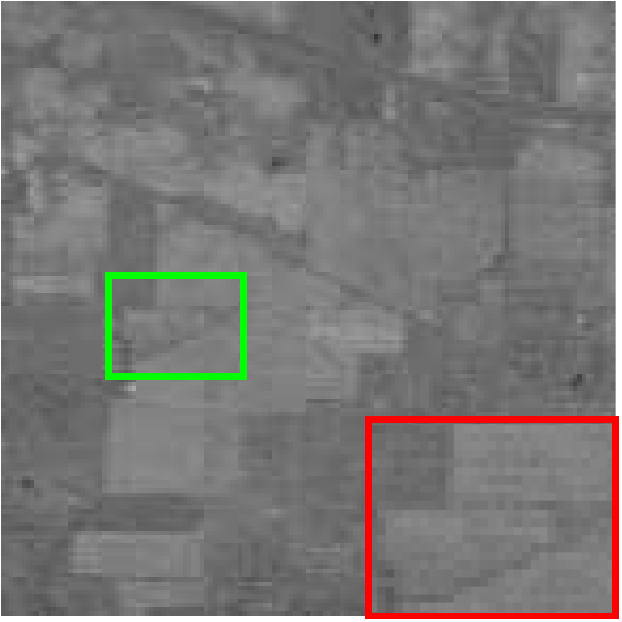}
	}\hfil
	\subfloat[LRMR] {
		
		\includegraphics[width=0.25\columnwidth]{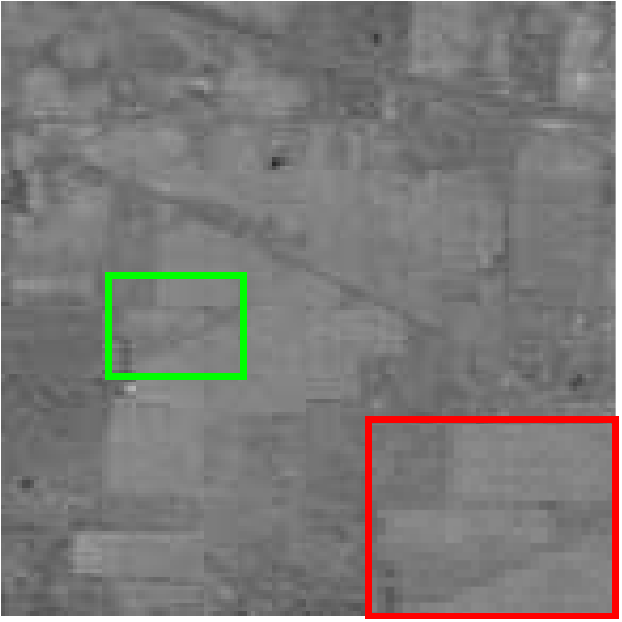}
	}\hfil
	\subfloat[LRTDTV] {
		
		\includegraphics[width=0.25\columnwidth]{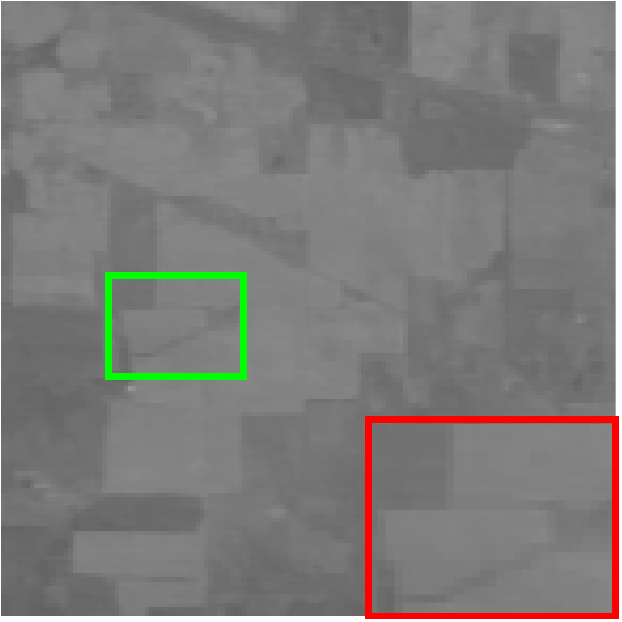}
	}\hfil
	\subfloat[3DTNN] {
		
		\includegraphics[width=0.25\columnwidth]{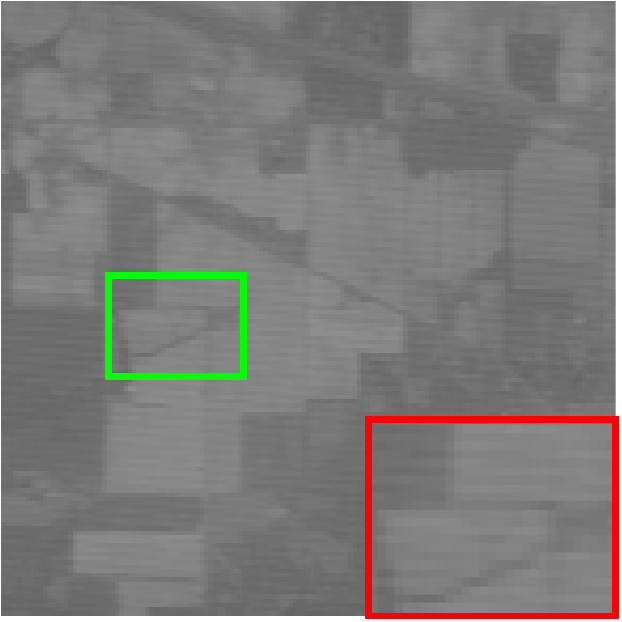}
	}\hfil
	\subfloat[Our] {
		
		\includegraphics[width=0.25\columnwidth]{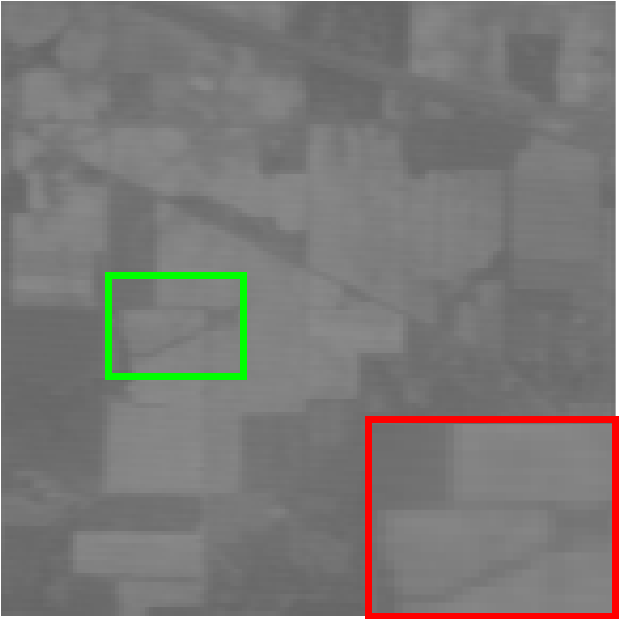}
	}\hfil \\
	\subfloat[Real band] {
	
	\includegraphics[width=0.25\columnwidth]{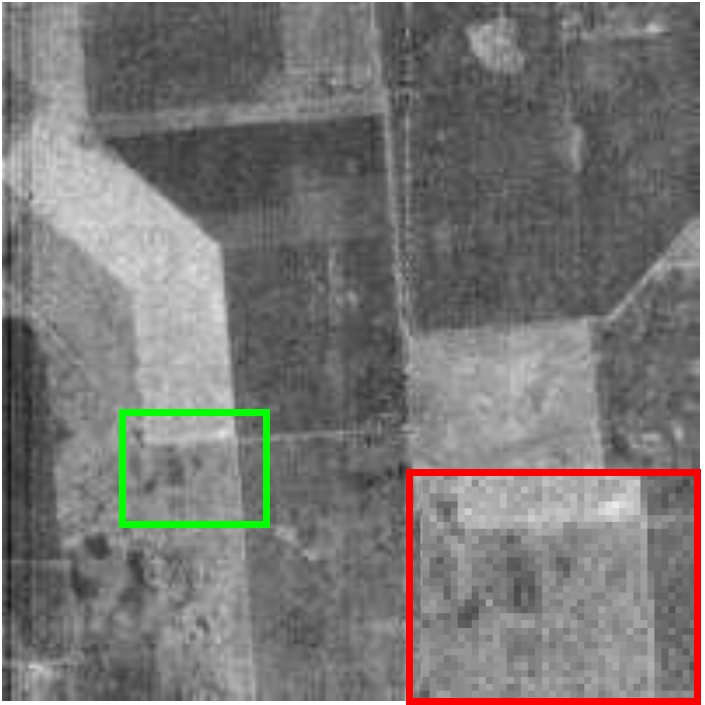}
	}\hfil
	%
	\subfloat[BM4D] {
		
		\includegraphics[width=0.25\columnwidth]{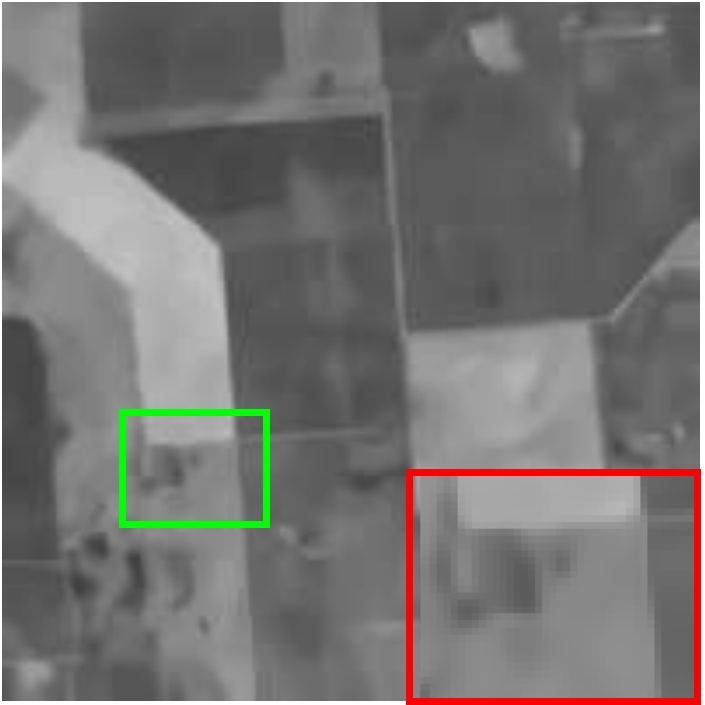}
	}\hfil
	\subfloat[LRMR] {
		
		\includegraphics[width=0.25\columnwidth]{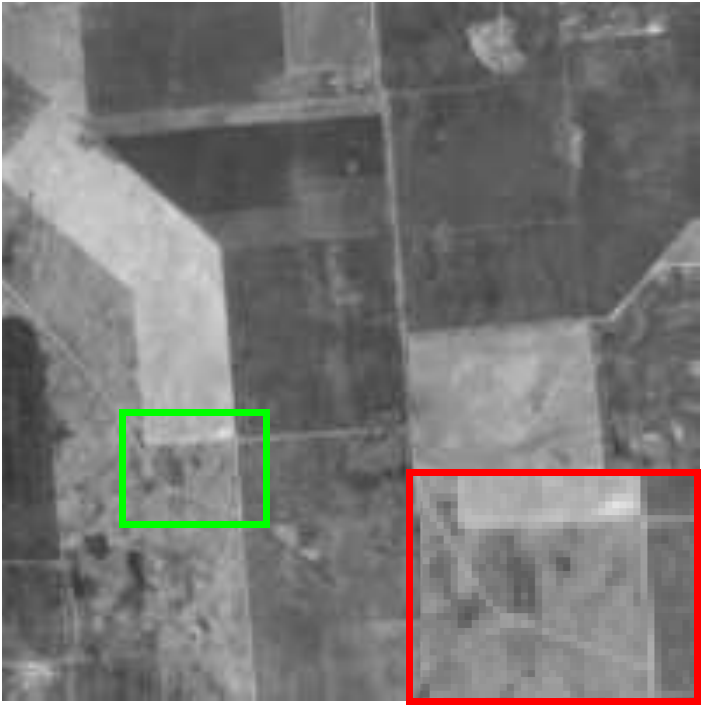}
	}\hfil
	\subfloat[LRTDTV] {
		
		\includegraphics[width=0.25\columnwidth]{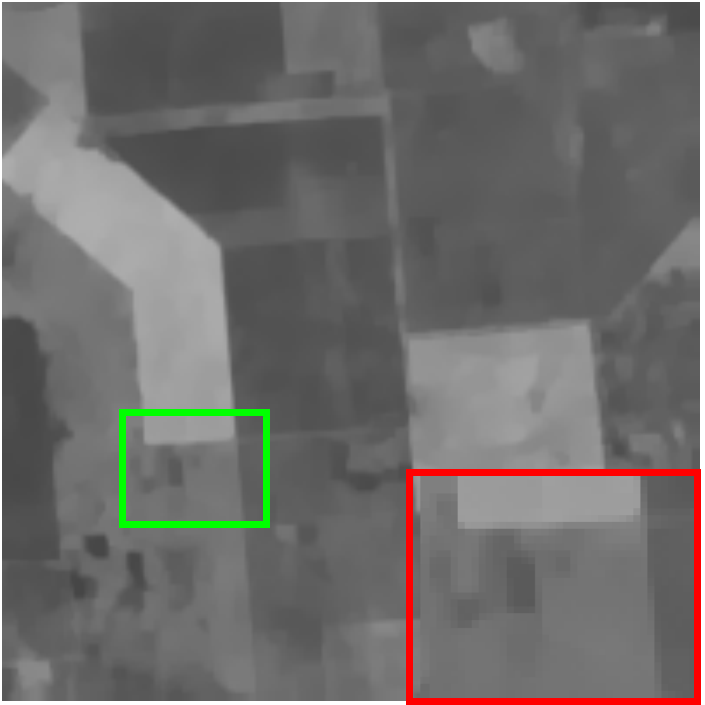}
	}\hfil
	\subfloat[3DTNN] {
		
		\includegraphics[width=0.25\columnwidth]{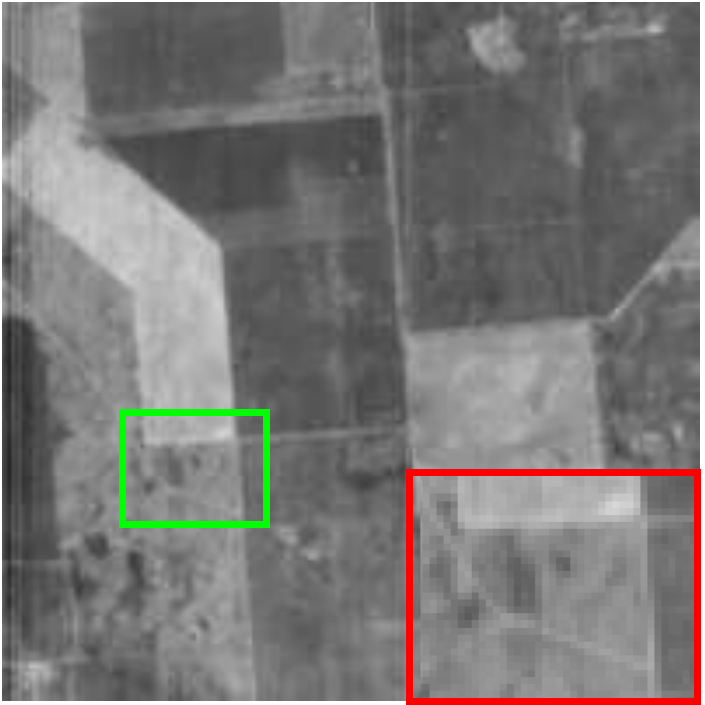}
	}\hfil
	\subfloat[Our] {
		
		\includegraphics[width=0.25\columnwidth]{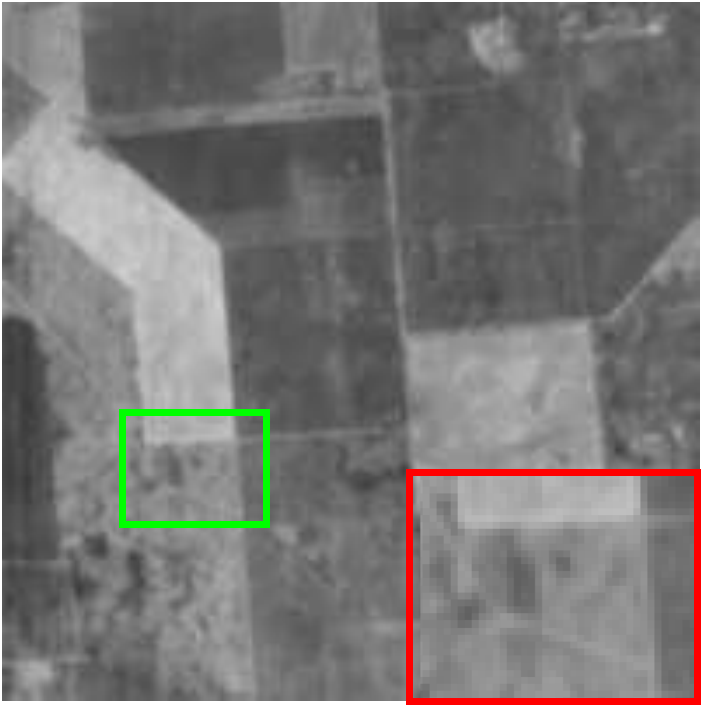}
	}\hfil
	\caption{  All denoised results for the Indian Pines and Australian dataset. (a)-(f) are the denoised results of the 150 band of the Indian Pines dataset. (g)-(l) are the denoised results of the 48 band of the Australian dataset.  } 
	\label{real_Indian150_Australian48}
\end{figure*}

\begin{figure*}[htbp] \centering
	\setlength{\abovecaptionskip}{-0.05cm}
		\setlength{\belowcaptionskip}{-0.5cm}
	\subfloat[Real dataset] {
		
		\includegraphics[width=0.25\columnwidth]{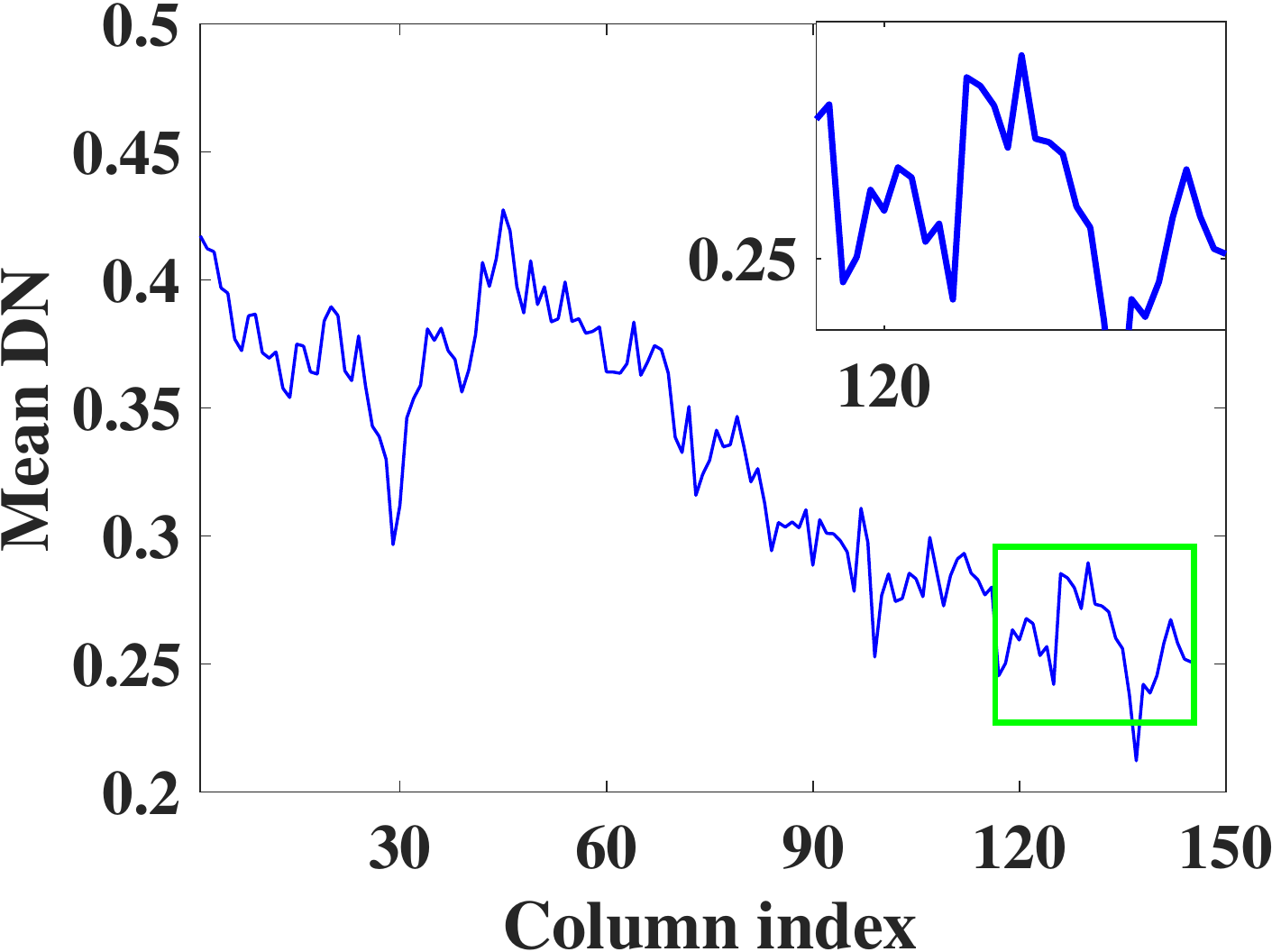}
	}\hfil
	%
	\subfloat[BM4D] {
		
		\includegraphics[width=0.25\columnwidth]{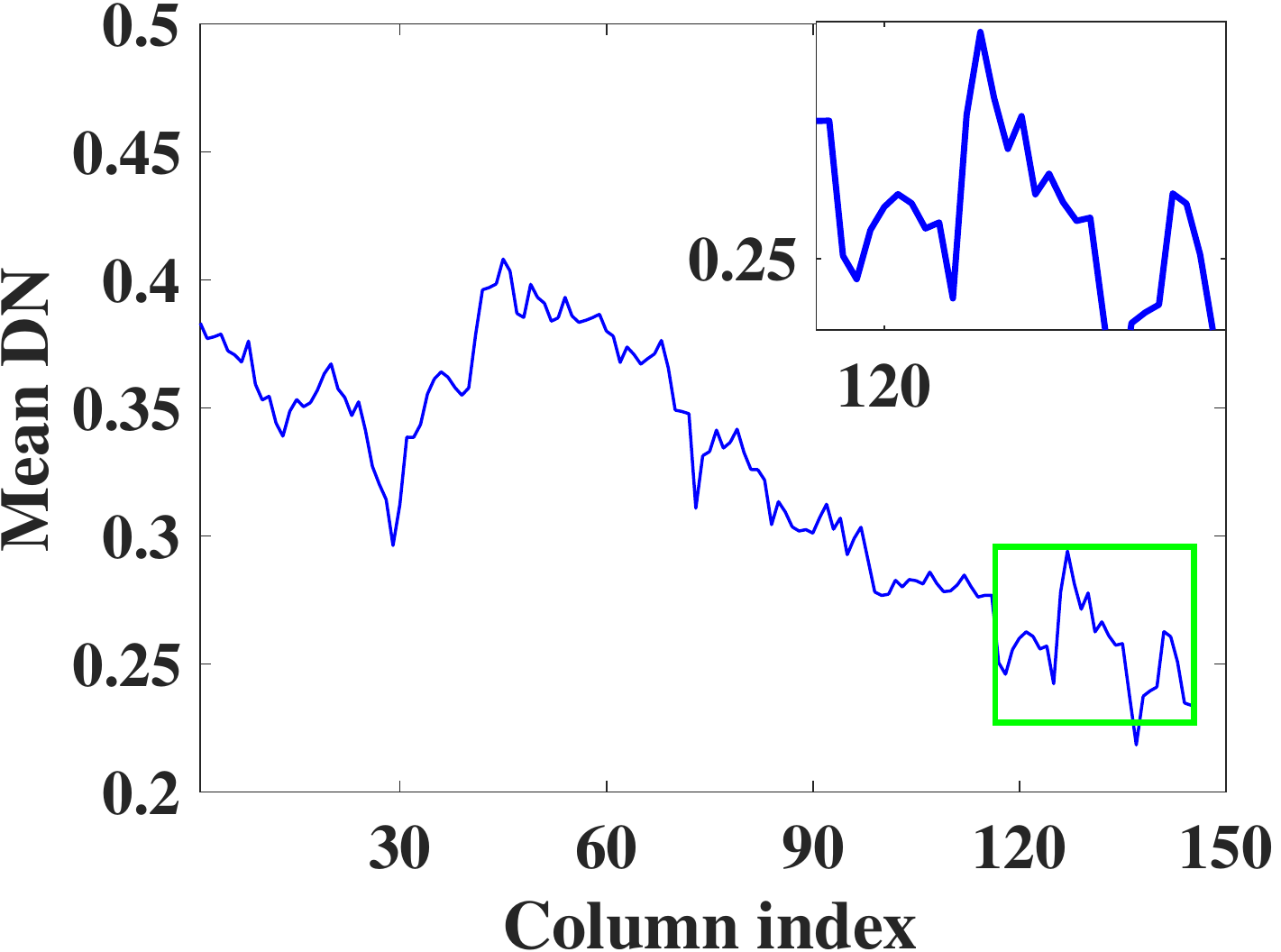}
	}\hfil
	\subfloat[LRMR] {
		
		\includegraphics[width=0.25\columnwidth]{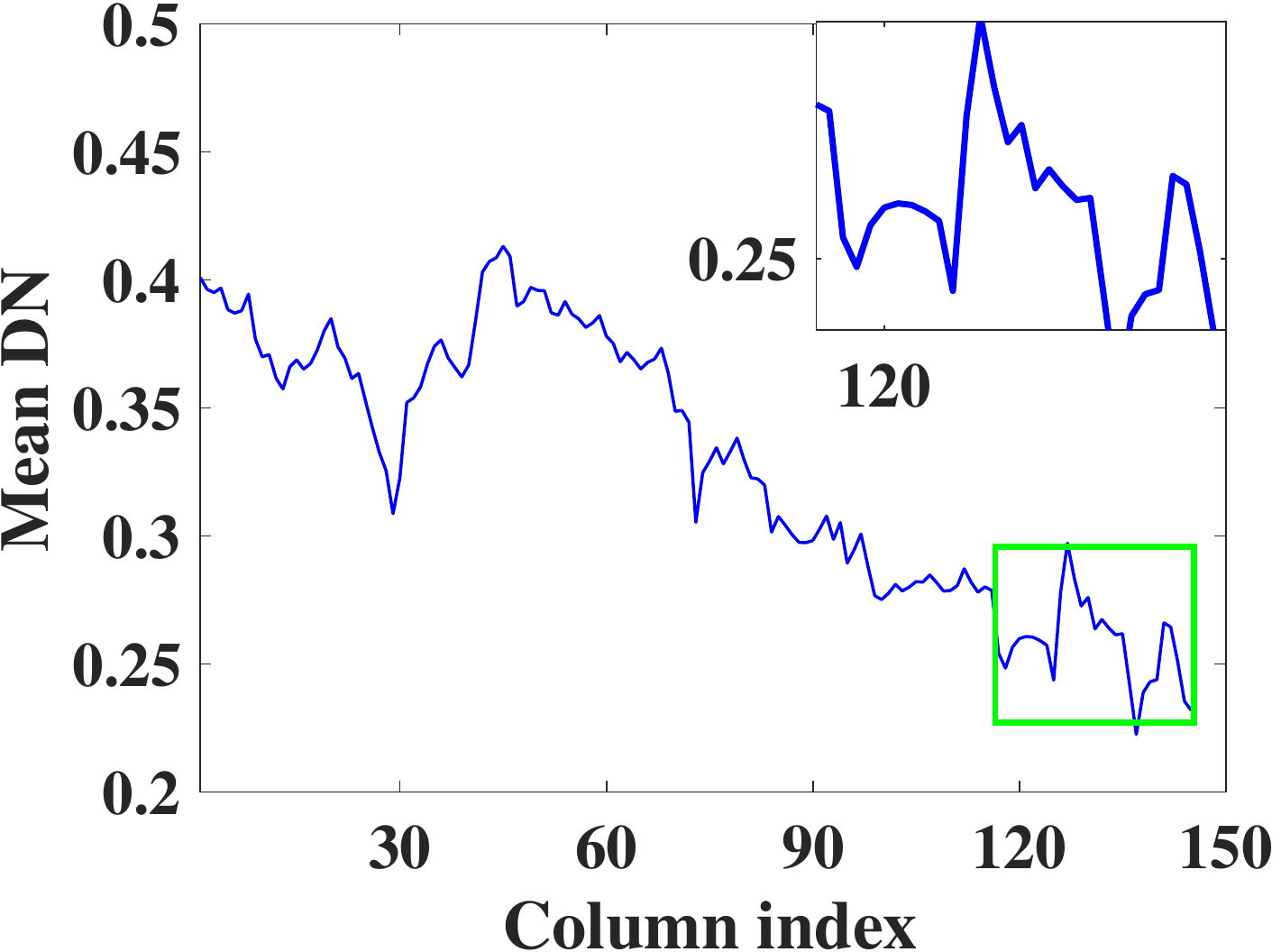}
	}\hfil
	\subfloat[LRTDTV] {
		
		\includegraphics[width=0.25\columnwidth]{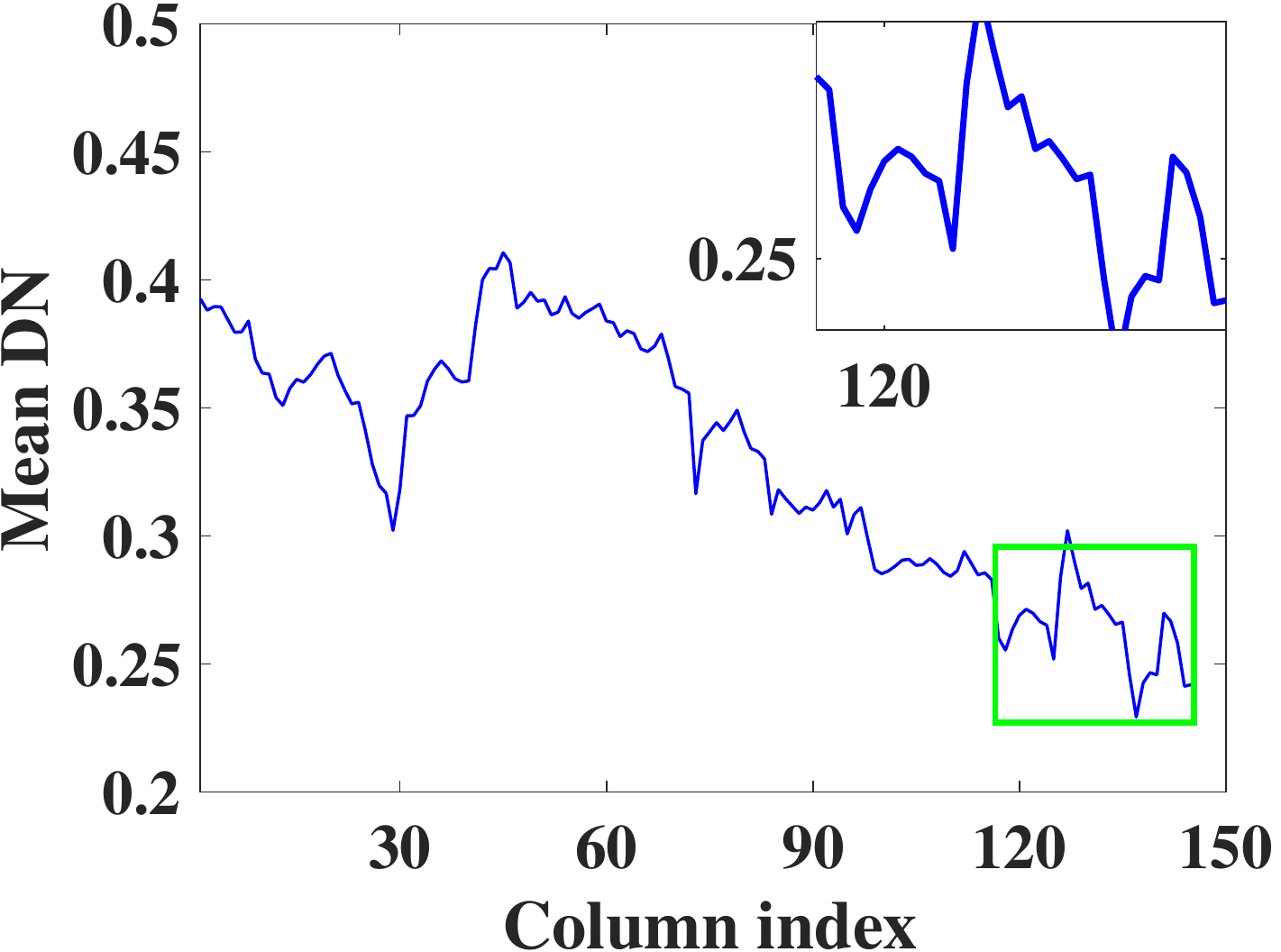}
	}\hfil
	\subfloat[3DTNN] {
		
		\includegraphics[width=0.25\columnwidth]{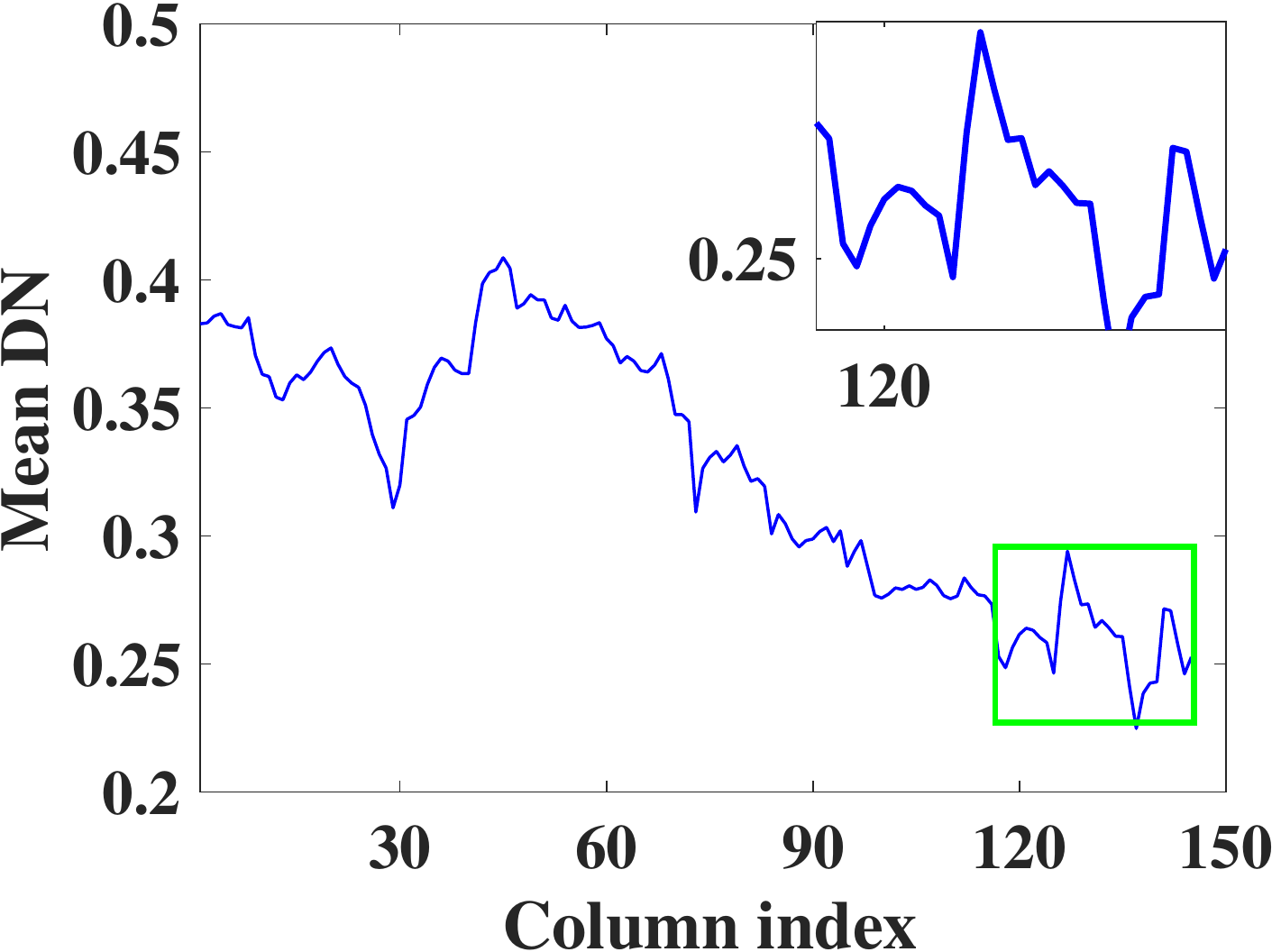}
	}\hfil
	\subfloat[Our] {
		
		\includegraphics[width=0.25\columnwidth]{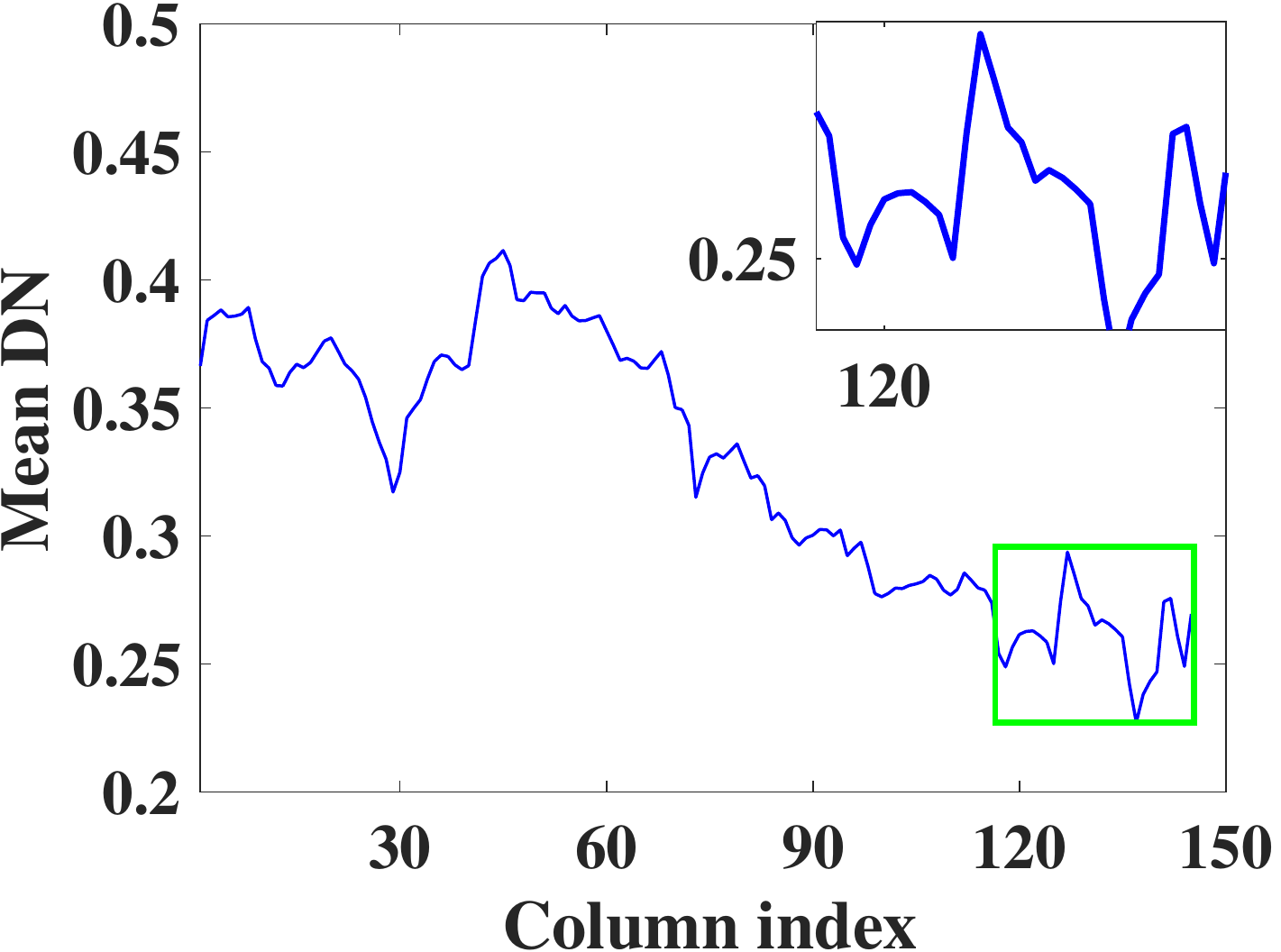}
	}\hfil
	\caption{  Vertical mean profiles of band 218 by all denoised results for the Indian Pines.  } 
	\label{real_Indian218_DN}
\end{figure*}

\section{Conclusion}

In this letter, we propose a multi-modal and double-weighted TNN for HSI denoising tasks.
The proposed TNN can efficiently characterize the physical meanings of the frequency components, singular values, and orientations ignored by the standard TNN.
And the weight parameters also can be obtained adaptively.
They powerfully improve capability and flexibility for describing low-rankness in HSIs.
The experiments conducted with both simulate and real HSI datasets show that
our MDWTNN based HSI denoising model is a competitive method to remove the hybrid noise.
Besides, our proposed MDWTNN regularization term can also be applied to other low-rankness based tasks, i.e., hyperspectral imagery classification, tensor completion, MRI reconstruction.

\small
\bibliographystyle{IEEEtran}
\bibliography{IEEEabrv,reference}

\end{document}